    \newtheorem{theorem}{Theorem}
    \newtheorem{proposition}[theorem]{Proposition}
    \newtheorem{remark}{Remark}
    \newtheorem{example}{Example}
\def\maxwidth{ %
  \ifdim\Gin@nat@width>\linewidth
    \linewidth
  \else
    \Gin@nat@width
  \fi
}
\definecolor{fgcolor}{rgb}{0.345, 0.345, 0.345}
\definecolor{shadecolor}{rgb}{.97, .97, .97}
\definecolor{messagecolor}{rgb}{0, 0, 0}
\definecolor{warningcolor}{rgb}{1, 0, 1}
\definecolor{errorcolor}{rgb}{1, 0, 0}
\newenvironment{knitrout}{}{} 
\newcommand{\ARMNumModels}{53}
\newcommand{\MCParamDim}{124}
\newcommand{\OccParamDim}{1,884}
\newcommand{\PotusParamDim}{15,098}
\newcommand{\TennisParamDim}{5,014}
\newcommand{\ARMMinParamDim}{2}
\newcommand{\ARMMedParamDim}{5}
\newcommand{\ARMMaxParamDim}{176}
\newcommand{\TennisNUTSMinutes}{57}
\newcommand{\PotusNUTSMinutes}{643}
\newcommand{\MCNUTSMinutes}{597}
\newcommand{\OccNUTSMinutes}{251}
\newcommand{\ARMMinNUTSSeconds}{15}
\newcommand{\ARMMedNUTSSeconds}{39}
\newcommand{\ARMMaxNUTSMinutes}{16}
\newcommand{\DADVINumDraws}{30}
\newcommand{\ArmModels}{separation (2), wells\_dist100 (2), nes2000\_vote (2), wells\_d100ars (3), earn\_height (3), sesame\_one\_pred\_b (3), radon\_complete\_pool (3), earnings1 (3), kidscore\_momiq (3), kidscore\_momhs (3), electric\_one\_pred (3), sesame\_one\_pred\_a (3), sesame\_one\_pred\_2b (3), logearn\_height (3), electric\_multi\_preds (4), congress (4), wells\_interaction\_c (4), earnings2 (4), logearn\_logheight (4), kidiq\_multi\_preds (4), wells\_dae (4), wells\_interaction (4), logearn\_height\_male (4), ideo\_reparam (5), logearn\_interaction (5), kidscore\_momwork (5), kidiq\_interaction (5), wells\_dae\_c (5), mesquite\_volume (5), earnings\_interactions (5), wells\_dae\_inter (5), wells\_daae\_c (6), wells\_dae\_inter\_c (7), mesquite\_vash (7), wells\_predicted\_log (7), mesquite (8), mesquite\_vas (8), mesquite\_log (8), sesame\_multi\_preds\_3b (9), sesame\_multi\_preds\_3a (9), pilots (17), election88 (53), radon\_intercept (88), radon\_no\_pool (89), radon\_group (90), electric (100), electric\_1b (101), electric\_1a (109), electric\_1c (114), hiv (170), hiv\_inter (171), radon\_vary\_si (174), radon\_inter\_vary (176)}
\newcommand{\TennisNumCGParams}{20}
\newcommand{\OccNumCGParams}{20}
\newcommand{\TracesARM}{

\begin{knitrout}
\definecolor{shadecolor}{rgb}{0.969, 0.969, 0.969}\color{fgcolor}\begin{figure}[!h]

{\centering \includegraphics[width=0.98\linewidth,height=0.653\linewidth]{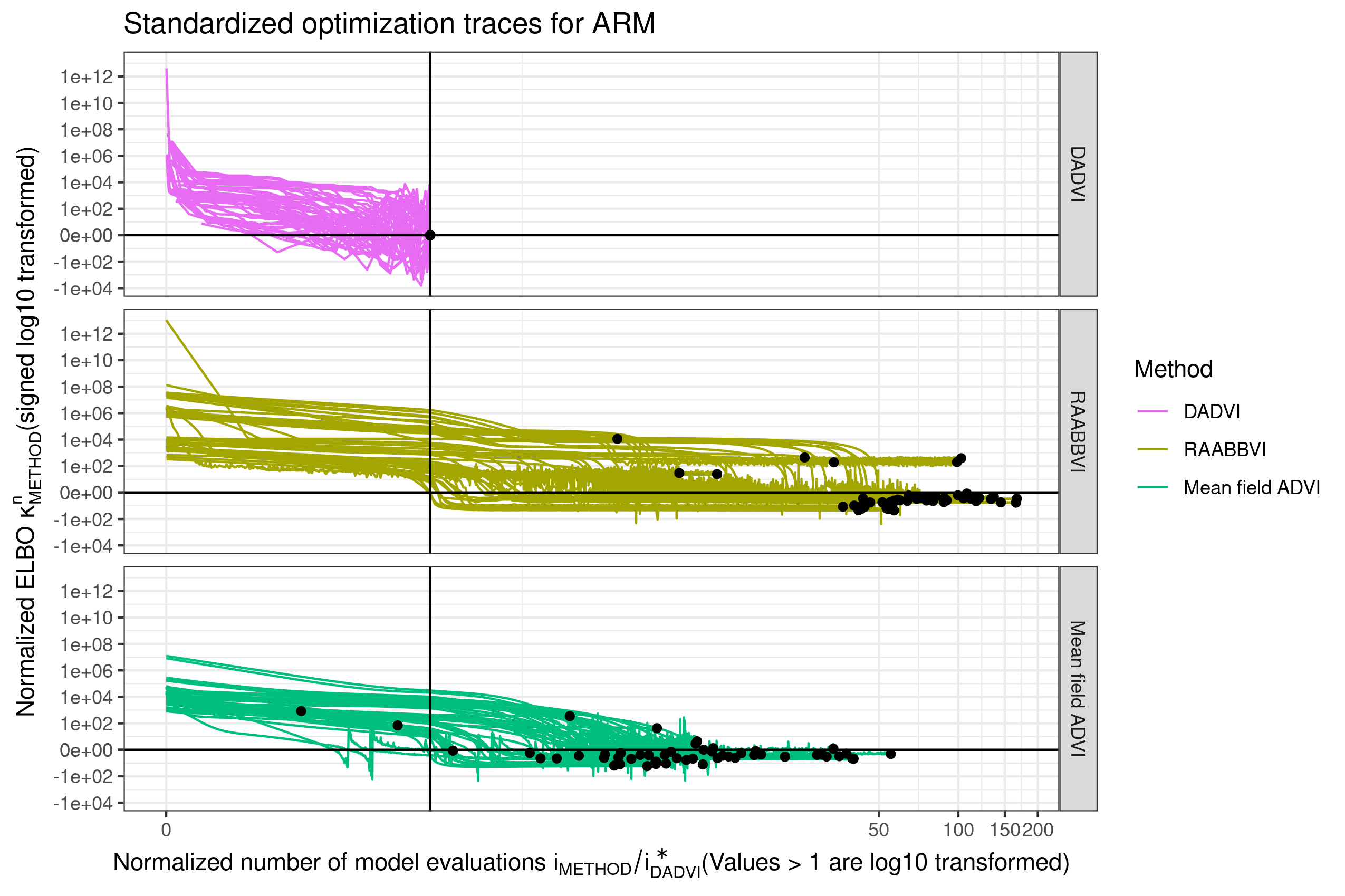} 

}

\caption[Optimization traces for the ARM models]{Optimization traces for the ARM models.  Black dots show the termination point of each method. Dots above the horizontal black line mean that DADVI found a better ELBO. Dots to the right of the vertical black line mean that DADVI terminated sooner in terms of model evaluations.}\label{fig:traces_arm_graph}
\end{figure}

\end{knitrout}
}
\newcommand{\TracesNonARM}{

\begin{knitrout}
\definecolor{shadecolor}{rgb}{0.969, 0.969, 0.969}\color{fgcolor}\begin{figure}[!h]

{\centering \includegraphics[width=0.98\linewidth,height=0.653\linewidth]{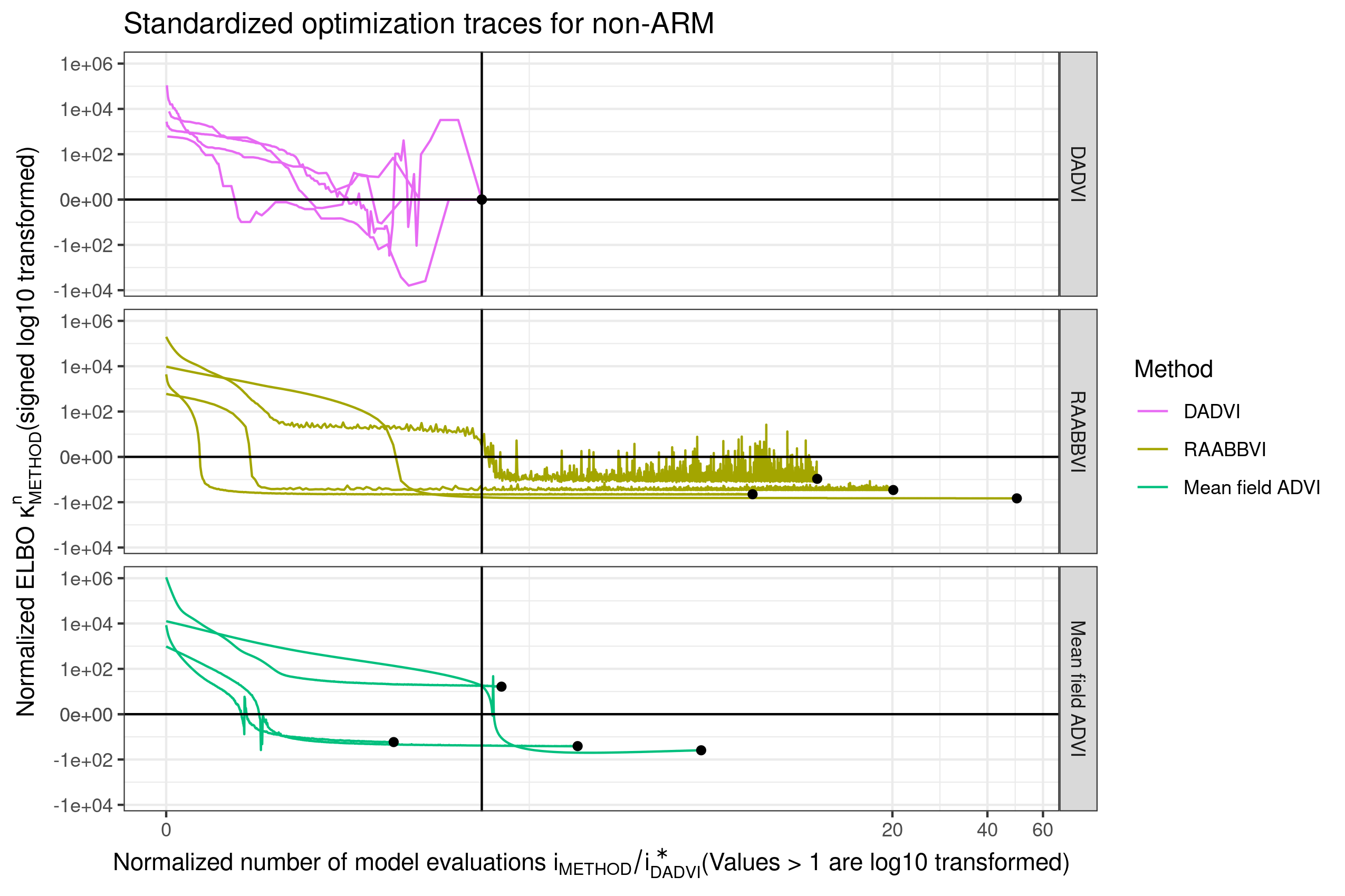} 

}

\caption[Traces for non-ARM models]{Traces for non-ARM models.  Black dots show the termination point of each method. Dots above the horizontal black line mean that DADVI found a better ELBO. Dots to the right of the vertical black line mean that DADVI terminated sooner in terms of model evaluations.}\label{fig:traces_nonarm_graph}
\end{figure}

\end{knitrout}
}
\newcommand{\RuntimeARM}{

\begin{knitrout}
\definecolor{shadecolor}{rgb}{0.969, 0.969, 0.969}\color{fgcolor}\begin{figure}[!h]

{\centering \includegraphics[width=0.98\linewidth,height=0.653\linewidth]{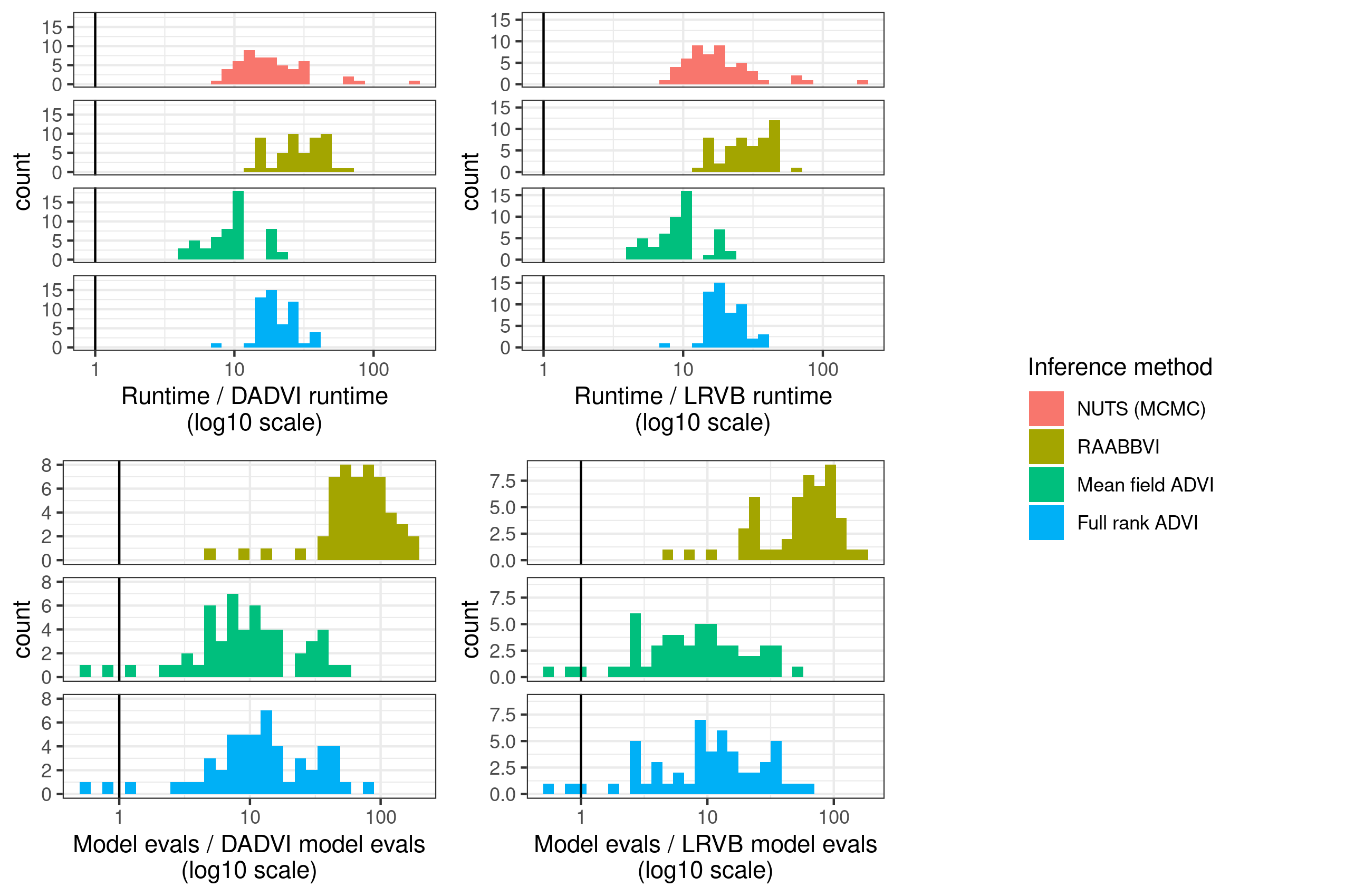} 

}

\caption[Runtimes and model evaluation counts for the ARM models]{Runtimes and model evaluation counts for the ARM models. Results are reported divided by the corresponding value for DADVI or LRVB.  Numbers greater than one (shown by the black line) indicate favorable performance by DADVI or LRVB.  Recall that the reported LRVB numbers include the cost of the DADVI optimization as well as the LR covariances.  Most of the ARM models are relatively low-dimensional, so the LR covariances added little to the computation.}\label{fig:runtimes_arm_graph}
\end{figure}

\end{knitrout}
}
\newcommand{\RuntimeNonARM}{

\begin{knitrout}
\definecolor{shadecolor}{rgb}{0.969, 0.969, 0.969}\color{fgcolor}\begin{figure}[!h]

{\centering \includegraphics[width=0.98\linewidth,height=0.653\linewidth]{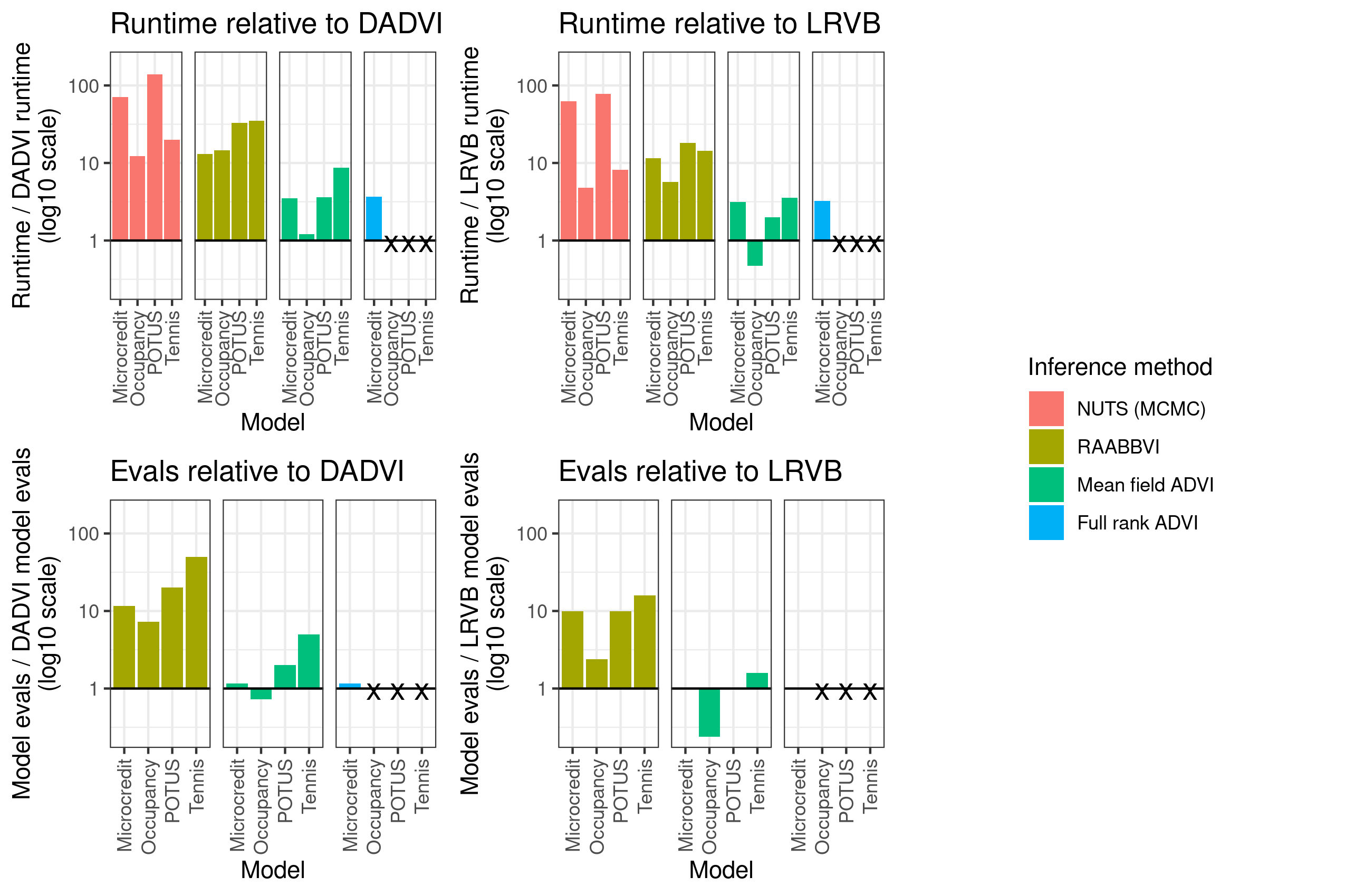} 

}

\caption[Runtimes and model evaluation counts for the non-ARM models]{Runtimes and model evaluation counts for the non-ARM models. Results are reported divided by the corresponding value for DADVI or LRVB.  Numbers greater than one (shown by the black line) indicate favorable performance by DADVI or LRVB. Recall that the reported LRVB numbers include the cost of the DADVI optimization as well as the LR covariances. Missing model and method combinations are marked with an X.}\label{fig:runtimes_nonarm_graph}
\end{figure}

\end{knitrout}
}
\newcommand{\PosteriorAccuracyARM}{

\begin{knitrout}
\definecolor{shadecolor}{rgb}{0.969, 0.969, 0.969}\color{fgcolor}\begin{figure}[!h]

{\centering \includegraphics[width=0.98\linewidth,height=0.653\linewidth]{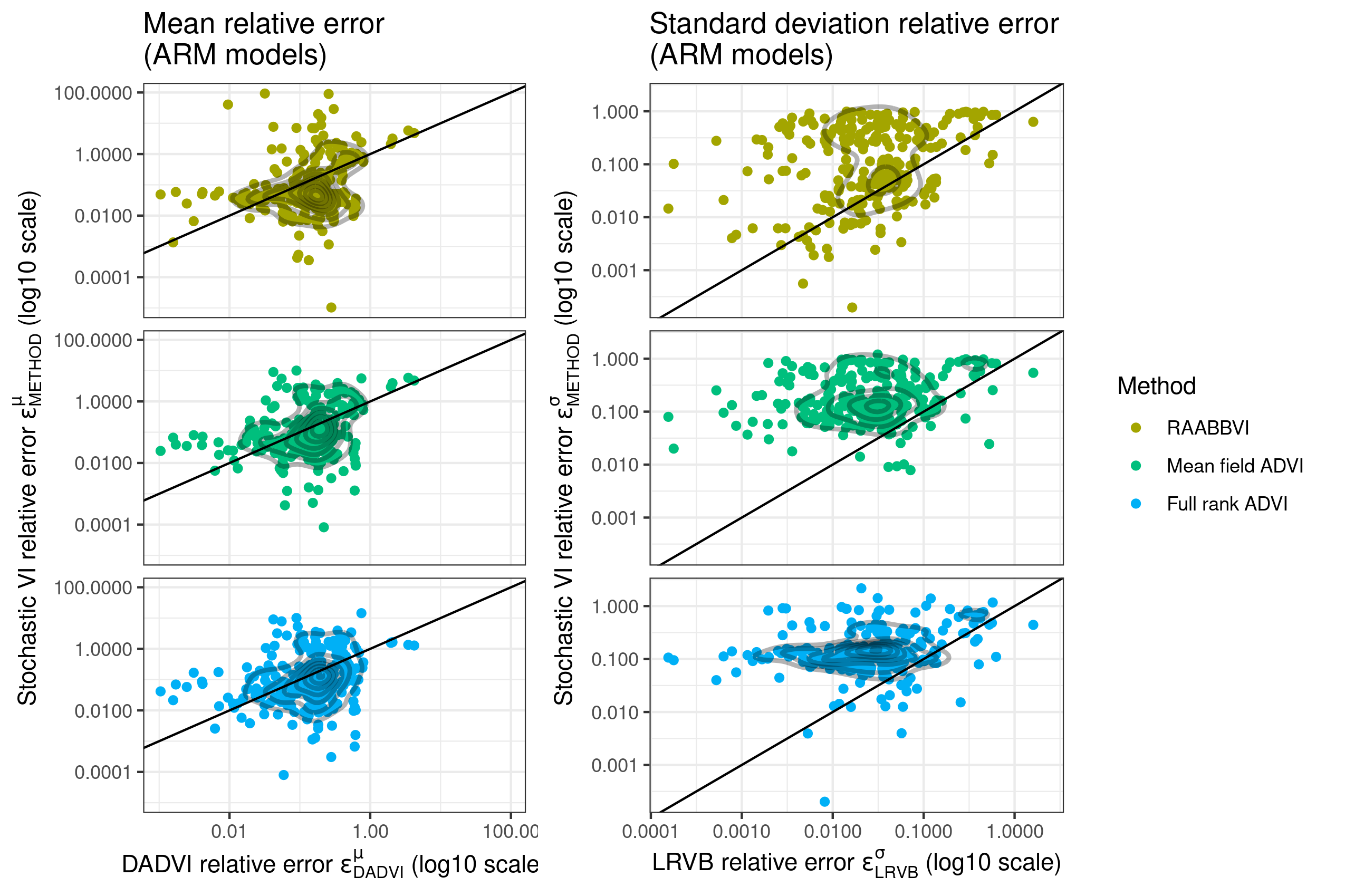} 

}

\caption[Posterior accuracy measures for the ARM models]{Posterior accuracy measures for the ARM models. Each point is a single named parameter in a single model. Points above the diagonal line indicate better DADVI or LRVB performance.  Level curves of a 2D density estimator are shown to help visualize overplotting.}\label{fig:posterior_arm_graph}
\end{figure}

\end{knitrout}
}
\newcommand{\PosteriorAccuracyNonARM}{

\begin{knitrout}
\definecolor{shadecolor}{rgb}{0.969, 0.969, 0.969}\color{fgcolor}\begin{figure}[!h]

{\centering \includegraphics[width=0.98\linewidth,height=0.653\linewidth]{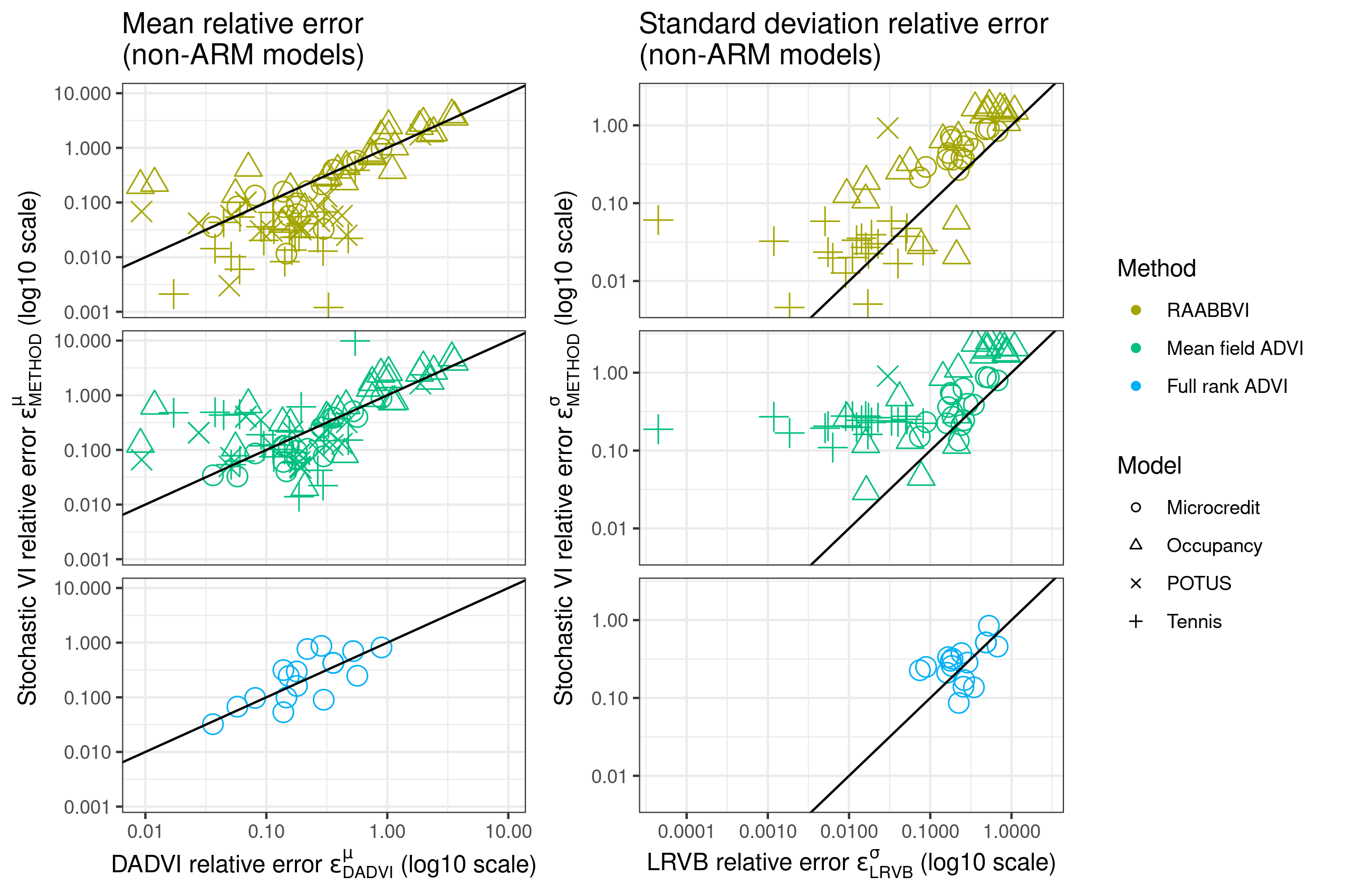} 

}

\caption[Posterior accuracy measures for the non-ARM models]{Posterior accuracy measures for the non-ARM models. Each point is a single named parameter in a single model. Points above the diagonal line indicate better DADVI or LRVB performance. }\label{fig:posterior_nonarm_graph}
\end{figure}

\end{knitrout}
}
\newcommand{\CoverageHistogram}{

\begin{knitrout}
\definecolor{shadecolor}{rgb}{0.969, 0.969, 0.969}\color{fgcolor}\begin{figure}[!h]

{\centering \includegraphics[width=0.98\linewidth,height=0.653\linewidth]{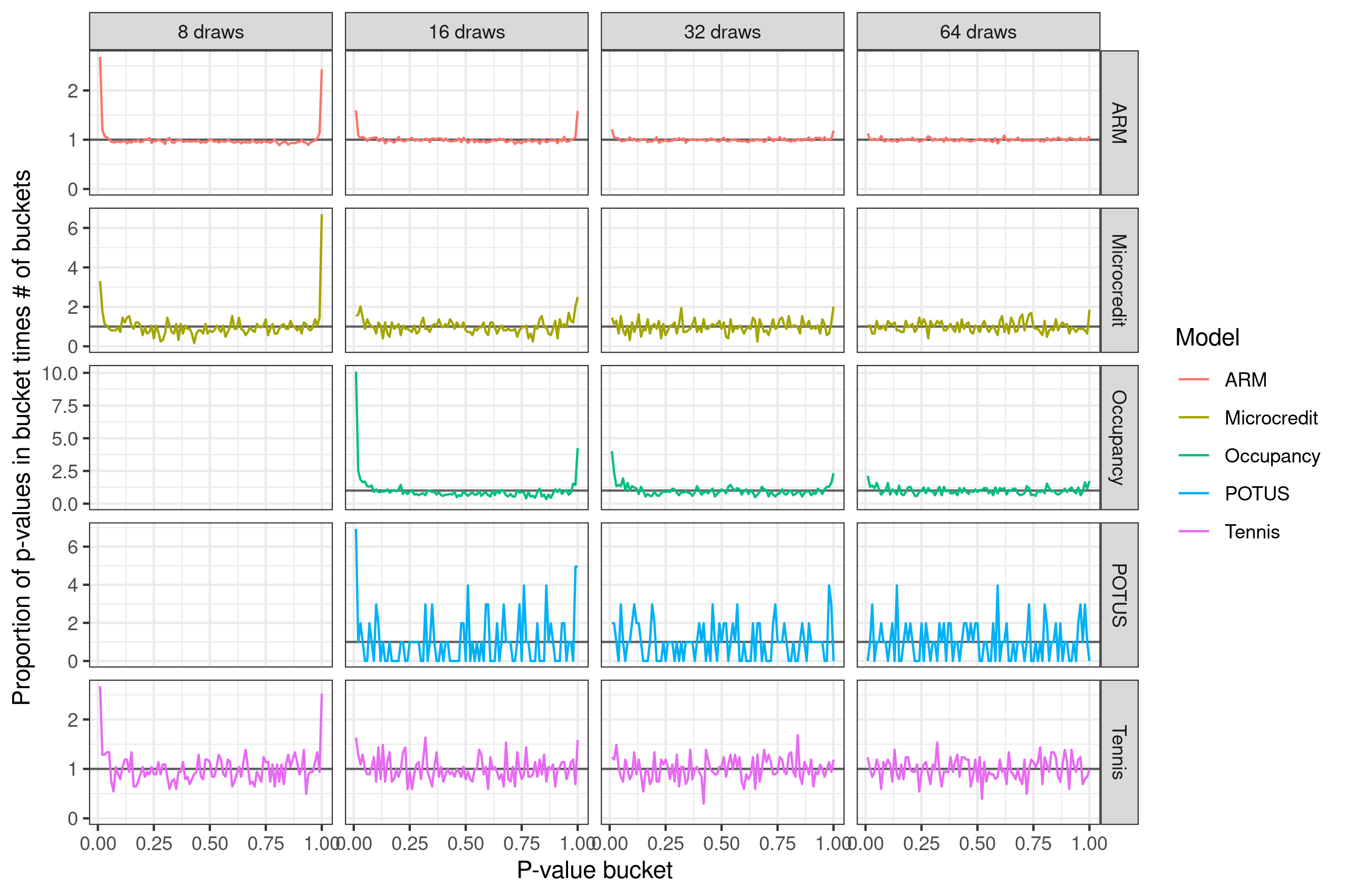} 

}

\caption[Density estimates of $\Phi(\freqerr)$ for difference models]{Density estimates of $\Phi(\freqerr)$ for difference models. All the ARM models are grouped together for ease of visualization.  Each panel shows a binned estimate of the density of $\Phi(\freqerr)$ for a particular model and number of draws $\znum$. Values close to one (a uniform density) indicate good frequentist performance.  CG failed for the Occupancy and POTUS models with only 8 draws, possibly indicating poor optimization performance with so few samples.}\label{fig:coverage}
\end{figure}

\end{knitrout}
}
\begin{document}



\newcommand{\unnumberedfootnote}[2]{%
  \begingroup
  \renewcommand{\thefootnote}{}%
  \def\thefootnote{#1}\footnotetext{#2}\def\thefootnote{\arabic{footnote}}
  \endgroup
}

\title{Black Box Variational Inference with a Deterministic Objective: Faster, More Accurate, and Even More Black Box}

\notbool{arxiv}{
    \author{\name Ryan Giordano$^*$ \email rgiordano@berkeley.edu \\
    \addr Department of Statistics \\ 
    University of California \\
    Berkeley, CA, USA
    \AND
    \name Martin Ingram$^*$ \email martin.ingram@gmail.com  \\
    \addr Department of Biosciences\\
    University of Melbourne\\
    Parkville, VIC 3010, Australia
    \AND
    \name Tamara Broderick \email tbroderick@mit.edu \\
    \addr Department of Electrical Engineering and Computer
    Science \\ Massachusetts Institute of Technology \\
    Cambridge, MA, USA\\
  }
}{
    \author{Ryan Giordano,$^*$
       Martin Ingram,$^*$
       Tamara Broderick
     }
}

\maketitle

\unnumberedfootnote{*}{These authors contributed equally.}
\unnumberedfootnote{-}{The code to reproduce this paper can be found at 
\url{https://github.com/rgiordan/DADVIPaper}.}
\unnumberedfootnote{-}{Our Python implementation of DADVI can be found at \url{https://github.com/martiningram/dadvi}.}

\begin{abstract}
    Automatic differentiation variational inference (ADVI) offers fast and
easy-to-use posterior approximation in multiple modern probabilistic programming
languages. However, its stochastic optimizer lacks clear convergence criteria
and requires tuning parameters. Moreover, ADVI inherits the poor posterior
uncertainty estimates of mean-field variational Bayes (MFVB).  We introduce
``deterministic ADVI'' (DADVI) to address these issues. DADVI replaces the
intractable MFVB objective with a fixed Monte Carlo approximation, a technique
known in the stochastic optimization literature as the ``sample average
approximation'' (SAA).  By optimizing an approximate but deterministic
objective, DADVI can use off-the-shelf second-order optimization, and, unlike
standard mean-field ADVI, is amenable to more accurate posterior covariances via
linear response (LR).  In contrast to existing worst-case theory, we show that,
on certain classes of common statistical problems, DADVI and the SAA can perform
well with relatively few samples even in very high dimensions, though we also
show that such favorable results cannot extend to variational approximations
that are too expressive relative to mean-field ADVI. We show on a variety of
real-world problems that DADVI reliably finds good solutions with default
settings (unlike ADVI) and, together with LR covariances, is typically faster
and more accurate than standard ADVI.
\end{abstract}

\notbool{arxiv}{
    \begin{keywords}
        Black box variational inference,
        Mean field approximation,
        Automatic differentiation variational inference,
        Linear response covariances,
        Sample average approximation,
        Stochastic gradient
    \end{keywords}    
}

\section{Introduction}\label{sec:intro}
The promise of ``black-box'' Bayesian inference methods is that the user need
provide only a model and data. Then the black-box method should take care of
approximating the posterior distribution and reporting any summaries of interest
to the user. In settings where Markov chain Monte Carlo (MCMC) faces prohibitive
computational costs, users of Bayesian inference have increasingly turned to
variational methods. In turn, to improve ease of use in these cases, researchers
have developed a variety of ``black-box variational inference'' (BBVI) methods
\citep{ranganath:2014:bbvi, blei:2016:variational}. ``Automatic differentiation
variational inference'' (ADVI) represents a particularly widely used variant of
BBVI \citep{kucukelbir:2017:advi}, available in multiple modern probabilistic
programming languages.

However, researchers have observed that BBVI methods can face challenges with
both automation \citep{dhaka:2020:robuststochasticvi, welandawe:2022:robustbbvi}
and accuracy (\citealp[][Exercise 33.5]{mackay:2003:information};
\citealp[][Chapter 10.1.2]{bishop:2006:pattern}; \citealp{turner:2011:two};
\citealp[][Propositions 3.1--3.3]{huggins:2020:validated}). In particular, BBVI
takes an optimization-based approach to approximate Bayesian inference. The
optimization objective in a typical BBVI method involves an intractable
expectation over the approximating distribution. Most BBVI algorithms, including
ADVI, avoid computing the intractable expectation by using stochastic gradient
(SG) optimization, which requires only unbiased draws from the gradient of the
intractable objective. However, the use of SG is not without a price: SG
requires careful tuning of the step size schedule, can suffer from poor
conditioning, and convergence can be difficult to assess. On the accuracy side,
observe that ADVI minimizes the reverse Kullback-Leibler (KL) divergence over
Gaussian approximating distributions. The especially common mean-field variant
of this scheme, where the Gaussians are further constrained to fully factorize,
notoriously produces poor posterior covariance estimates (\citealp[][Exercise
33.5]{mackay:2003:information}; \citealp[][Chapter 10.1.2]{bishop:2006:pattern};
\citealp{turner:2011:two}), and research suggests variants beyond mean-field may
suffer as well \citep[][Proposition 3.2]{huggins:2020:validated}. In many cases,
these posterior covariance estimates can be efficiently corrected, without
fitting a more complex approximation, through a form of sensitivity analysis
known as ``linear response'' (LR) \citep{giordano:2015:lrvb,
giordano:2018:covariances}. However, LR cannot be used directly with SG, both
because the optimum is only a rough approximation and because the objective
function itself is intractable.

The stochastic optimization literature offers a well-studied alternative to SG:
the ``sample average approximation'' (SAA), which uses a single set of draws ---
shared across all iterations --- to approximate an intractable expected
objective. See \citet{kim:2015:guidetosaa} for a review of the SAA. In fact, a
number of papers have applied SAA to BBVI
\citep{giordano:2018:covariances,domke:2018:importanceweightingvi,domke:2019:divideandcouplevi,
broderick:2020:automatic,wycoff:2022:sparsebayesianlasso,giordano:2023:bnp}. But
before the present work and contemporaneous work by
\citet{burroni:2023:saabbvi}, there had not yet been a systematic study of the
efficacy of SAA for BBVI. \cite{burroni:2023:saabbvi} chooses an increasing
sequence of sample sizes in SAA, applied to variational inference with the
full-rank Gaussian approximation family, in order to achieve an increasingly
accurate approximation to the exact variational objective. In a complementary
vein, we here instead explore the promise and challenges of using SAA in BBVI
with a small, fixed number of samples --- with a focus on both automation and
accuracy. We call our method ``DADVI'' for ``deterministic ADVI,'' and we use
the unmodified ``ADVI'' to refer to the ADVI variational approximation optimized
with SG.

When considering a general optimization problem, the case for SAA over SG may at
first look weak. In full generality, SAA and SG require roughly the same number
of draws, $\znum$, for a particular accuracy. And the total number of draws required for a
given accuracy is expected to increase linearly in dimension \citep[][Chapter
5]{nemirovski:2009:sgdvsfixed,shapiro:2021:lectures}. Since SG uses each draw
only once, and SAA uses each draw at each step of a multi-step optimization
routine, SAA is, all else equal, expected to require more computation than SG in
the worst-case scenario, particularly in high dimensions
\citep{royset:2013:optimalsaa, kim:2015:guidetosaa}.  However, results in
particular cases can be quite different than these general conclusions.

We demonstrate that the SAA can be competitive with SG in BBVI applications both
theoretically and in experiments using real-world models and datasets.
Theoretically, we consider two cases common in Bayesian inference: (1) log
posteriors that are approximately quadratic, and (2) posteriors that have a
``global--local'' structure: roughly, there are some (global) parameters of
fixed dimension as the data set size grows, and some (local) parameters whose
dimension grows with the data cardinality. We further assume, as is typically
the case, that the user is interested in a relatively small number of quantities
of interest that are specified in advance, as opposed to, say, the maximum value
of a high-dimensional vector of posterior means.  In these cases, our theory
shows that DADVI does not suffer from the worst-case dimensional dependence that
the classical SAA literature suggests. In our experiments, we show that DADVI
produces competitive posterior approximations in very high-dimensional problems,
even with only $\znum = \DADVINumDraws$ draws, and even in models more complex
than the cases that we analyze theoretically.  Notably, in high dimensions, LR
covariances are considerably more computationally efficient --- and more
accurate --- than fitting a more complex variational approximation, such as a
full-rank normal. To our knowledge, the advantages of SAA for performing
sensitivity analysis, either within or beyond Bayesian inference, have not been
widely recognized.

Conversely, we show that SAA is not applicable to all BBVI methods. For example,
we show that, when using a full-rank ADVI approximation in high dimensions, the
SAA approximation leads to a degenerate variational objective unless the number
of draws used is very high --- on the order of the number of parameters.
The intuition behind how SAA fails in such a case applies to other highly
expressive BBVI approximations such as normalizing flows
\citep{rezende:2015:flows}.
In high dimensions, it is thus a combination of the relative paucity of the
mean-field ADVI approximation, together with special problem structure, that
makes DADVI a useful tool. Nevertheless, such cases are common enough that the
benefits of DADVI remain noteworthy.

In what follows, we start by reviewing ADVI (\cref{sec:setup}) and describing
DADVI (\cref{sec:dadvi}), our SAA approximation. We highlight how DADVI, unlike
ADVI, allows the use of LR covariances (\cref{sec:linear_response}). In
\cref{sec:mc_error_estimation}, we demonstrate how to approximately quantify
DADVI's Monte Carlo error, which arises from the single set of Monte Carlo
draws, and we note that such a quantification is not readily available for ADVI
due to its use of SG. We provide theory to support why DADVI can be expected to
work in certain classes of high-dimensional problems
(\cref{sec:high_dim_normal,sec:high_dim_global_local}), and we provide a
counterexample to demonstrate how DADVI can fail with very expressive BBVI
approximations (\cref{sec:dadvi_full_rank}). In a range of real-world examples
(\cref{sec:models_data}), we show that DADVI inherits the generally recognized
advantages of SAA, including the availability of off-the-shelf higher-order
optimization and reliable convergence assessment. We find experimentally that
DADVI, paired with LR covariances, can provide comparable posterior mean
estimates and more accurate posterior uncertainties
(\cref{sec:experiments_posterior_accuracy}) with less computation than
corresponding ADVI methods (\cref{sec:experiments_runtime}), including recent
work that endeavors to improve and automate the tuning of SG for BBVI
\citep{welandawe:2022:robustbbvi}, and we show that our estimates of Monte Carlo
sampling variability are accurate even for small values of $\znum$, around
$\DADVINumDraws$ (\cref{sec:experiments_sampling_variability}).



\section{Setup} \label{sec:setup}
In what follows, we take data $\y$ and a finite-dimensional parameter $\theta
\in \thetadom$. We consider a user who is able to provide software
implementations of the log density of the joint distribution $\p(\y, \theta)$
and is interested in reporting means and variances from an approximation of the
exact Bayesian posterior $\p(\theta \vert \y)$.

Black-box variational inference (BBVI) refers to a spectrum of approaches for
approximating this posterior. We focus in the present paper on ADVI, a
particularly popular instance of BBVI. Variational inference forms an
approximation $\q(\theta \vert \eta)$, with variational parameters $\eta \in
\etadom$, to $\p(\theta \vert \y)$. Let $\normal{\cdot}{\mu, \Sigma}$ denote a
normal distribution with mean $\mu$ and covariance matrix $\Sigma$. The
full-rank variant of ADVI approximately minimizes the reverse KL divergence
$\textrm{KL}\left( \q(\cdot \vert \eta) || \p(\cdot \vert \y)\right)$ between
the exact posterior and an approximating family of multivariate normal
distributions:
\begin{align}\label{eq:advi_qdom}
\qdom ={}& \left\{
    \q(\theta \vert \eta):
    \q(\theta \vert \eta) =
        \normal{\theta}{\mu(\eta), \Sigma(\eta)} \right\},
\end{align}
where $\eta \mapsto (\mu(\eta), \Sigma(\eta))$ is a (locally) invertible map
between the space of variational parameters and the mean and covariance of the
normal distribution. When we optimize $\eta$ over this family, we will refer to
the resulting optimization problem as the ``full-rank ADVI optimization
problem.''

In particular, we will typically focus on the following objective function,
which is equivalent to the one above:
\begin{align}\label{eq:kl_objective}
%
\klfullobj{\eta} :=
        \expect{\q(\theta \vert \eta)}{\log \q(\theta \vert \eta)}
        -\expect{\q(\theta \vert \eta)}{\logjoint}.
%
\end{align}
The objective $\klfullobj{\eta}$ in \cref{eq:kl_objective} is equivalent to the
KL divergence $\textrm{KL}\left( \q(\cdot \vert \eta) || \p(\cdot \vert
\y)\right)$ up to $\log \p(\y)$, which does not depend on $\eta$, so that
minimizing $\klfullobj{\eta}$ also minimizes the KL divergence. The negative of
the objective, $-\klfullobj{\eta}$, is sometimes called the ``evidence lower
bound'' (ELBO) \citep{blei:2016:variational}. 

To avoid degeneracy in the objective, typically one transforms any model
parameters with restricted range before running the optimization --- and
performs the reverse transformation after. E.g., we might take the logarithm of
any strictly positive parameters so that their transformed range is the full
real line; see \citet{kucukelbir:2017:advi} for further details. Therefore, we
will henceforth assume that $\thetadom = \rdom{\thetadim}$ and $\p(\theta)$ is
supported on all $\rdom{\thetadim}$. Then in the full-rank case, $\eta$ contains
both the mean and some unconstrained representation of a $\thetadim$-dimensional
covariance matrix, so that $\eta \in \rdom{\thetadim + \thetadim (\thetadim + 1)
/ 2}$.

The mean-field variant of ADVI restricts $\Sigma(\eta)$ to be
diagonal.\footnote{Technically, in the mean-field variant of ADVI,
$\Sigma(\eta)$ may sometimes be block diagonal; see \cref{app:mean-field}.
We elide this special case in what
follows for ease of exposition. Our experiments are fully diagonal.} That is,
take the approximating family $\qdom$ to consist of independent normals with
means $\etamu \in \rdom{\thetadim}$ and log standard deviations $\etaxi \in
\rdom{\thetadim}$.
\begin{align}\label{eq:advi_qdom_mf}
\qdom ={}& \left\{
    \q(\theta \vert \eta):
    \q(\theta \vert \eta) =
        \prod_{d=1}^{\thetadim}
            \normal{\theta_d}{\etamu[d], \exp(\etaxi[d])^2}
\right\} \\
\etamu ={}& (\etamu[1], \ldots, \etamu[\thetadim])^\trans
\textrm{, }
\etaxi = (\etaxi[1], \ldots, \etaxi[\thetadim])^\trans
\textrm{, }
\eta = (\etamu^\trans, \etaxi^\trans)^\trans
\textrm{, }
\etadom = \mathbb{R}^{\etadim}
\textrm{, and }
\etadim = 2\thetadim.
\nonumber
\end{align}
For mean-field ADVI, the variational parameter $\eta^\trans = (\etamu^\trans,
\etaxi^\trans) \in \rdom{2\thetadim}$. When the variational family satisfies the
mean-field assumption, we will refer to the resulting optimization problem as
the ``mean-field ADVI optimization problem'' and its objective as the
``mean-field ADVI objective.''

Using the expression for univariate normal entropy and neglecting some
constants, the mean-field ADVI objective in \cref{eq:kl_objective} becomes
\begin{align}\label{eq:kl_objective_advi}
\klfullobj{\eta} :=
-\sum_{d=1}^{\thetadim} \etaxi[d] -
\expect{\normal{\theta}{\eta}}{\log \p(\theta, \y)}
\quad\textrm{and}\quad
\etastar := \argmin_{\eta \in \etadom} \klfullobj{\eta}.
\end{align}
We would ideally like to compute $\etastar$, but we cannot optimize
$\klfullobj{\eta}$ directly, because the term
$\expect{\normal{\theta}{\eta}}{\log \p(\theta, \y)}$ is generally intractable.
ADVI, like most current BBVI methods, employs stochastic gradient optimization
(SG) to avoid computing $\klfullobj{\eta}$.  Specifically, ADVI uses Monte Carlo
and the ``reparameterization trick'' \citep{mohamed:2020:mcgradients} as
follows. Let $\normz$ denote the $\thetadim$-dimensional standard normal
distribution. If $\z \sim \normz$, then
%
\begin{align}\label{eq:reparameterization}
\expect{\normal{\theta}{\eta}}{\log \p(\theta, \y)} ={}&
    \expect{\normz}{\log \p(\etamu + \z \odot \exp(\etaxi), \y)}.
\end{align}
For compactness, we write 
\begin{align}\label{eq:reparameterization_theta}
    \theta(\eta, \z) := \etamu + \z \odot \exp(\etaxi),
\end{align}
where $\odot$ is the component-wise (Hadamard) product. 
For $\znum$ independent
draws\footnote{A subscript $\z_n$ will denote a particular member of the set
$\Z$, though for the rest of the paper, subscripts will usually denote an entry
of a vector.} $\Z := \{ \z_1, \ldots, \z_\znum \}$ from $\normz$, we can use
\cref{eq:reparameterization} to define an unbiased estimate for the mean-field
$\klfullobj{\eta}$:
\begin{align}\label{eq:reparameterization_kl}
\klobj{\eta | \Z} := 
-\sum_{d=1}^{\thetadim} \etaxi[d] -
\meann \log \p(\theta(\eta, \z_n), \y).
%
\end{align}
ADVI uses derivatives of $\klobj{\eta | \Z}$, with a new draw of $\Z$ at each
iteration, to estimate $\etastar$. The ADVI algorithm, which we will sometimes
refer to as ``ADVI'' in shorthand, can be found in \cref{alg:sadvi}. When we use
the ADVI algorithm for the full-rank optimization problem, we will write
``full-rank ADVI.''

\begin{minipage}{1.0\linewidth}
    \begin{minipage}[t]{0.49\linewidth}
        \begin{algorithm}[H]
        \caption{ADVI (Existing method)}\label{alg:sadvi}
        \begin{algorithmic}
        \Procedure{ADVI}{}
            \State $t \gets 0$
            \State Fix $\znum$ (typically $\znum = 1$)
            \While{Not converged}
                \State $t \gets t + 1$
                \State Draw $\Z$
                \State $\Delta \gets \grad{\eta}{\klobj{\eta_{t-1} | \Z}}$
                \State $\alpha_t \gets \textrm{SetStepSize(All past states)}$
                \State $\eta_t \gets \eta_{t-1} - \alpha_t \Delta$
                \State $\textrm{AssessConvergence(All past states)}$
            \EndWhile
            \State $\tilde{\eta} \gets \eta_t$ or
                $\tilde{\eta} \gets \frac{1}{M} \sum_{t'=t- M + 1}^t \eta_{t'}$
            \State \Return $\q(\theta \vert \tilde{\eta})$
        \EndProcedure

        \Postprocessing \hspace{1em}(If possible)
            \State Assess MC error using $\eta_1, \ldots, \eta_t$
            \If{MC Error is too high}
                \State Re-run with smaller / more steps
            \EndIf
        \EndPostprocessing
        \end{algorithmic}
        \end{algorithm}
    \end{minipage}
    \begin{minipage}[t]{0.49\linewidth}
        \begin{algorithm}[H]
        \caption{DADVI (Our proposal)}\label{alg:dadvi}
        \begin{algorithmic}
        \Procedure{DADVI}{}
            \State $t \gets 0$
            \State Fix $\znum$ (our default is $\znum = 30$)
            \State Draw $\Z$
            \While{Not converged}
                \State $t \gets t + 1$
                \State
    $\Delta \gets \textrm{GetStep}(\klobj{\cdot | \Z}, \eta_{t-1})$
                \State $\eta_t \gets \eta_{t-1} + \Delta$
                \State $\textrm{AssessConvergence}(\klobj{\cdot | \Z}, \eta_t)$
            \EndWhile
            \State $\etaopt \gets \eta_t$
            \State \Return $\q(\theta \vert \etaopt)$
        \EndProcedure
        \Postprocessing
            \State Compute LR covariances (\cref{sec:linear_response})
            \State Assess MC error (\cref{sec:mc_error_estimation})
            \If{MC Error is too high}
                \State Re-run with more samples in $\Z$
            \EndIf
        \EndPostprocessing
        \end{algorithmic}
        \end{algorithm}
    \end{minipage}
\end{minipage}

\vspace{1em}


\section{Our Method}\label{sec:dadvi}
Our method, DADVI, will start from the same optimization objective as ADVI, but
it will use a different approximation to handle the intractable objective. As we
have seen, in ADVI, each step of the optimization draws a new random variable.
The key difference in our method, DADVI, is that the random approximation is
instead made with a single set of draws and then fixed throughout optimization.
The full DADVI algorithm appears in \cref{alg:dadvi}. In the notation of
\cref{eq:reparameterization_kl}, for a particular $\Z$, 
the value $\etahat$ returned by DADVI in \cref{alg:dadvi} is given by
\begin{align}\label{eq:dadvi_estimate}
    \etahat := \argmin_{\eta \in \etadom} \klobj{\eta | \Z}.
\end{align}
The $\etahat$ of \cref{eq:dadvi_estimate} is an estimate of $\etastar$ insofar
as its objective $\klobj{\eta}$ is a random approximation to the true objective
$\klfullobj{\eta}$. In general, the idea of DADVI can be applied to either the
mean-field or full-rank ADVI optimization problem (though see
\cref{sec:dadvi_full_rank} below for some potential challenges when using DADVI
with full-rank ADVI).  In what follows, analogously to how we refer to the ADVI
algorithm, we will assume that we are targeting the mean-field problem with
DADVI unless explicitly stated that we are instead targeting the full-rank
problem.

For DADVI, the reparameterization of \cref{eq:reparameterization} is essential:
it allows us to use the same set of draws $\Z$ for any value of $\eta$. With
$\Z$ fixed, $\klobj{\cdot | \Z}$ in turn remains fixed throughout optimization.
This consistency would not be possible in general without a reparameterization
like \cref{eq:reparameterization} to separate the stochasticity from the shape
of $\q(\theta \vert \eta)$.  

Note that, for a given $\Z$, all derivatives of $\klobj{\eta | \Z}$ required by
either DADVI or ADVI can be computed using automatic differentiation and a
software implementation of $\log \p(\theta, \y)$.  In this sense, both DADVI and
ADVI are black-box methods. 
In practice, another key difference between ADVI and DADVI is that ADVI
typically draws only a single random variable per iteration, whereas DADVI uses
a larger number of draws; in particular, the default number of draws for DADVI
in our experiments will be $\znum = \DADVINumDraws$.

We will see in what follows that using DADVI instead of ADVI can reap large
practical benefits.

\subsection{Linear response covariances}\label{sec:linear_response}
We next review linear response (LR) covariances as an approximation for
posterior covariances of interest. We then show how DADVI accommodates LR
covariances in a way that ADVI does not. The key observation is that, since ADVI
does not actually minimize a tractable objective, sensitivity measures such as
LR covariances are not available, though they are for DADVI. To the authors'
knowledge, the availability of such sensitivity measures for SAA but not SG is
not yet a widely recognized advantage of SAA.

One well-documented failure of mean-field variational Bayes approximations
(including mean-field ADVI) is the mis-estimation of posterior variance
\citep{bishop:2006:pattern, turner:2011:two, giordano:2018:covariances,
margossian:2023:shrinkage}.  Even in cases for which mean-field approximations
provide good approximations to posterior means (e.g.\ when a Bayesian central
limit theorem can be approximately applied), the posterior variances are
typically incorrect. Formally, we often find that, for some quantity of interest
$\qoi{\theta} \in \mathbb{R}$,
\begin{align}\label{eq:mfvb_conceit}
\expect{\q(\theta \vert \etastar)}{\qoi{\theta}} \approx
\expect{\post}{\qoi{\theta}} \quad\textrm{but}\quad
\abs{\var{\q(\theta \vert \etastar)}{\qoi{\theta}} - 
\var{\post}{\qoi{\theta}}} \gg 0.
\end{align}
A classical motivating example is the case of multivariate normal
posteriors, which we review in \cref{sec:high_dim_normal}.

LR covariances comprise a technique for ameliorating the mis-estimation of
posterior variances without fitting a more expressive approximating class and
enduring the corresponding increase in computational complexity
\citep{giordano:2018:covariances}. Since posterior hyperparameter sensitivity
takes the form of posterior covariances, posterior covariances can be estimated
using the corresponding sensitivity of the VB approximation. Specifically, for
some $\phi_2(\theta)$, consider the exponentially tilted posterior, $\p(\theta
\vert \y, t) \propto \p(\theta \vert \y) \exp(t \phi_2(\theta))$.  When we can
exchange integration and differentiation, we find that
\begin{align}\label{eq:deriv_is_cov}
\p(\theta
\vert \y, t) \propto \p(\theta \vert \y) \exp(t \phi_2(\theta))
\quad\Rightarrow\quad
\fracat{d \expect{\p(\theta \vert \y, t)}{\phi_1(\theta)}}
       {dt}{t=0} = \cov{\p(\theta \vert \y)}{\phi_1(\theta), \phi_2(\theta)}.
\end{align}
A detailed proof of \cref{eq:deriv_is_cov} is given in Theorem 1 of
\cite{giordano:2018:covariances}; see also the classical score estimator of the
derivative of an expectation \citep{mohamed:2020:mcgradients}.  Together,
\cref{eq:mfvb_conceit,eq:deriv_is_cov} motivate the LR approximation
\begin{align}\label{eq:lr_def}
\lrcovfull{\q(\theta \vert \etastar)}{\phi_1(\theta), \phi_2(\theta)} :=
\fracat{d \expect{\q(\theta \vert \etastar(t))}{\phi_1(\theta)}}
       {dt}{t=0} =
\fracat{\partial \expect{\q(\theta \vert \eta)}{\phi_1(\theta)}}
       {\partial \eta^\trans}{\eta=\etastar}
\fracat{d \etastar(t)}{d t}{t = 0}
\end{align}
where $\etastar(t)$ minimizes the KL divergence to the tilted posterior
$\p(\theta \vert \y, t)$. By applying the implicit function theorem to the
first-order condition $\grad{\eta}{\klfullobj{\etastar}} = 0$, together with the
chain rule, \citet{giordano:2018:covariances} show that
\begin{align}\label{eq:lr_deriv}
\lrcovfull{\q(\theta \vert \etastar)}{\phi_1(\theta), \phi_2(\theta)}
=
\fracat{\partial \expect{\q(\theta \vert \eta)}{\phi_1(\theta)}}
      {\partial\eta^\trans}{\eta=\etastar}
\left(\hess{\eta}{\klfullobj{\etastar}} \right)^{-1}
\fracat{\partial \expect{\q(\theta \vert \eta)}{\phi_2(\theta)}}
     {\partial\eta}{\eta=\etastar}.
\end{align}

As discussed in \citet{giordano:2018:covariances} ---
and demonstrated in our experiments to follow --- it can often be the case that
$
\lrcovfull{\q(\theta \vert \etastar)}{\phi_1(\theta), \phi_2(\theta)}
\approx \cov{\post}{\phi_1(\theta), \phi_2(\theta)}
$,
even when $\cov{\q(\theta \vert \etastar)}{\phi_1(\theta), \phi_2(\theta)}$ is
quite a poor approximation to $\cov{\post}{\phi_1(\theta), \phi_2(\theta)}$. For
example, in the case of multivariate normal posteriors, the LR covariances are
exact, as we discuss in \cref{sec:high_dim_normal} below. See
\citet{giordano:2018:covariances} for more extended discussion of the intuition
behind \cref{eq:lr_def}.

Unfortunately, the derivative $d\etastar(t) / dt$ required by \cref{eq:lr_def}
cannot be directly computed for ADVI. 
%
First, observe that the Hessian matrix $\hess{\eta}{\klfullobj{\etastar}}$ in
\cref{eq:lr_deriv} cannot be computed for ADVI since neither $\etastar$ nor
$\klfullobj{\cdot}$ is computable.  One might instead approximate
$\hess{\eta}{\klfullobj{\eta}}$ with $\hess{\eta}{\klobj{\eta \vert \Z}}$ by
using additional Monte Carlo samples, and then evaluate at the ADVI optimum.
However, due to noise in the SG algorithm, the ADVI optimum typically does not
actually minimize $\klfullobj{\eta}$ nor $\klobj{\eta \vert \Z}$, so one is not
justified in applying the implicit function theorem at the ADVI optimum.

In contrast, DADVI does not suffer from these difficulties because its objective
function is available, and DADVI typically finds a parameter that minimizes that
objective to a high degree of numerical accuracy; one can ensure directly that
$\etahat$ is, to high precision, a local minimum of $\klobj{\eta | \Z}$. 
Therefore, we are justified  in applying the implicit
function theorem to the first-order condition $\grad{\eta}{\klobj{\eta \vert
\Z}} = 0$.  If we follow the derivation of \cref{eq:lr_deriv} but
with $\klobj{\eta \vert \Z}$ in place of $\klfullobj{\eta}$, we find
the following tractable LR covariance estimate:
\begin{align}\label{eq:lr_deriv_dadvi}
\lrcov{\q(\theta \vert \etahat)}{\phi_1(\theta), \phi_2(\theta)}
={}&
\fracat{\partial \expect{\q(\theta \vert \eta)}{\phi_1(\theta)}}
      {\partial\eta^\trans}{\eta=\etahat}
\h^{-1}
\fracat{\partial \expect{\q(\theta \vert \eta)}{\phi_2(\theta)}}
     {\partial\eta}{\eta=\etahat}  \\
\quad\textrm{where}\quad
     \h :={}& \hess{\eta}{\klobj{\eta \vert \Z}}. \nonumber
\end{align}

%

We note that the same reasoning that leads to a tractable version of LR
covariances applies to other sensitivity measures, such as prior sensitivity
measures \citep{giordano:2023:bnp} or the infinitesimal jackknife
\citep{giordano:2019:swiss}. Though we do not explore these uses of sensitivity analysis
in the present work, one expects DADVI but not ADVI to support such analyses.

\subsection{Monte Carlo error
estimation}\label{sec:mc_error_estimation}
In this section we show how to estimate the Monte Carlo error of the output of
DADVI. Since this estimate is based on use of the implicit function theorem, as
we saw for LR covariances, it is again not as readily available to ADVI.

Let $\fun(\eta)$ denote some quantity of interest, such as a posterior
expectation of the form $\fun(\eta) = \expect{\q(\theta \vert
\eta)}{\phi_1(\theta)}$ as in the previous \cref{sec:linear_response}.  We are
now interested in the sampling variance of $\fun(\etahat) - \fun(\etastar)$ due
to the Monte Carlo randomness in $\Z$. We can apply standard asymptotic theory
for the variance of M-estimators to find that this sampling variance is, in the
notation of \cref{sec:linear_response}, consistently estimated by
\begin{align}\label{eq:mc_variance}
\var{\normz}{\fun(\etahat) - \fun(\etastar)}
\approx{}&
\frac{1}{\sqrt{N}} \grad{\eta}{\fun(\etaopt)}^\trans
\h^{-1} \scorecov \h^{-1}
\grad{\eta}{\fun(\etaopt)},  \\
\quad\textrm{where}\quad
\scorecov :={}&
\meann \grad{\eta}{\klobj{\etahat | \z_n}}
\grad{\eta}{\klobj{\etahat | \z_n}}^\trans. \nonumber
\end{align}
\Cref{eq:mc_variance} is analogous to the ``sandwich covariance'' estimate for
misspecified maximum likelihood models \citep{stefanski:2002:mestimation}.
Indeed, the question of how variable the DADVI estimate $\etahat$ is under
sampling of $\Z$ is exactly the same as asking how variable a misspecified
maximum likelihood estimator (or any M-estimator) is under sampling of the data,
and the same conceptual tools can be applied.  To complete the analogy, our
$\scorecov$ plays the role of the empirical score covariance, and $\h$ plays the
role of the empirical Fisher information.

Analogously to our discussion of LR covariances in \cref{sec:linear_response},
we briefly note that the classical derivation of \cref{eq:mc_variance} is based
on a Taylor series expansion of the first-order condition
$\grad{\eta}{\klobj{\eta \vert \Z}} = 0$, and so is not applicable to
estimators like ADVI that do not satisfy any computable first-order conditions.

\subsection{Computational considerations}
\label{sec:lr_mc_computation} 
We next describe best practices in computing LR covariances and the Monte Carlo
sampling variability of the DADVI estimate.  First, we delineate how to use
these quantities to check that the number of samples $\znum$ is adequate.
Second, we discuss how to handle the primary computational difficulty of
computing both quantities, namely the inverse of the Hessian matrix of the DADVI
objective at the optimum.

In the postprocessing step of \cref{alg:dadvi}, we recommend computing both
\cref{eq:lr_deriv_dadvi} and \cref{eq:mc_variance} for each quantity of
interest. One might consider a Bayesian analysis non-robust to sampling
uncertainty if decisions based on the Bayesian analysis might change due to the
sampling uncertainty. For instance, in a typical Bayesian analysis, one might
make decisions based on how far a posterior mean is from a decision boundary in
units of posterior standard deviation. Therefore, we might expect that sampling
variability could be decision-changing if the estimated sampling variability
dominated the estimated posterior uncertainty. In turn, then, we recommend using
a comparison of the estimated quantities from
\cref{sec:linear_response,sec:mc_error_estimation} to check the adequacy of the
sample size $\znum$. If the estimated sampling variability dominates or might
generally be sufficiently large as to be decision-changing, we recommend
increasing $\znum$. In the present work we will not attempt to formalize nor to
analyze such a procedure, although \citet{burroni:2023:saabbvi} and the general
SAA literature \citep{royset:2013:optimalsaa, kim:2015:guidetosaa} attempt to
estimate properties of the optimization and optimally allocate computing
resources in a schedule of increasing sample sizes.

In \cref{eq:lr_deriv_dadvi,eq:mc_variance}, the quantities $\scorecov$ and
$\grad{\eta}{\fun(\etaopt)}$ are typically straightforward to efficiently
compute with automatic differentiation, but direct computation of $\h^{-1}$
would incur a computational cost on the order of roughly $\etadim^3$, which can
be prohibitive in high-dimensional problems.  However, for a given quantity of
interest $f(\eta)$, it suffices for both \cref{eq:lr_deriv_dadvi,eq:mc_variance}
to compute the $\etadim$-vector $\h^{-1} \grad{\eta}{\fun(\etaopt)}$. For models
with very large $\etadim$, we recommend evaluating $\h^{-1}
\grad{\eta}{\fun(\etaopt)}$ using the conjugate gradient method, which requires
only Hessian-vector products of the form $\h v$ \citep[Chapter
5]{nocedal:1999:optimization}. These products can be evaluated quickly using
standard automatic differentiation software.  As long as the number of
quantities of interest is not large, both LR and sampling uncertainties can thus
be computed at considerably less computational cost than a full matrix
inversion.


\section{Considerations in high dimensions}\label{sec:high_dim}
As discussed in \cref{sec:intro}, classical analysis in the optimization
literature argues that, in the worst case, SAA is expected to require more
computation than SG for a given approximation accuracy.  The reason is that the
total number of samples required for a given accuracy scales linearly with
dimension, for both SG and SAA \citep[][Chapter
5]{nemirovski:2009:sgdvsfixed,shapiro:2021:lectures}. Since SAA requires more
computation per sample than SG, one would correspondingly expect SAA to require
more computation than SG for the same accuracy in high dimensional problems.

In this section we discuss why the aforementioned classical analysis of
dimension dependence does not necessarily apply to the particular structure of
the mean-field ADVI problem and some of its typical applications.  We argue
that, for problems that are approximately normal, or problems that are high
dimensional due only to a having a large number of low-dimensional ``local''
parameters, DADVI can be effective with a relatively small number of samples
which, in particular, need not grow linearly as the dimension of the problem
grows.  In contrast, we show that SAA may be inappropriate for more expressive
BBVI approximations, such as full-rank ADVI. A key assumption of our analysis is
that the user is interested in a relatively small number of scalar-valued
quantities of interest, even though these quantities of interest may depend in
some sense on the whole variational distribution.

\subsection{High dimensional normals}\label{sec:high_dim_normal}
We will show that in the normal model, the number of samples required to
estimate any particular posterior mean do not depend on the dimension.  Further,
the LR covariances from DADVI are exact, irrespective of the problem dimension,
and are in fact independent of the particular $\Z$ used.  Conversely, the
worst-case error in the DADVI posterior mean estimates across all dimensions
will grow as dimension grows.

Take the quadratic model
\begin{align}\label{eq:quadratic_model}
\logjoint =
    -\frac{1}{2} \theta^\trans A \theta + B^\trans \theta =
    -\frac{1}{2} \trace{A \theta \theta^\trans } + B^\trans \theta
\end{align}
for a known matrix $A \in \rdom{\thetadim \times \thetadim}$ and vector $B \in
\rdom{\thetadim}$, possibly depending on the data $\y$.  Such a model arises,
for example, when approximating the posterior of a conjugate normal location
model, in which case the posterior mean $A^{-1} B$ and covariance matrix
$A^{-1}$ would depend on the sufficient statistics of the data $\y$.
Additionally, as we show below, the exact variational objective is available in
closed form for the quadratic model. Of course, there is no need for a
variational approximation to a posterior which is available in closed form, nor
any need for a stochastic approximation to a variational objective which is
available in closed form. However, studying quadratic models can provide
intuition for the dimension dependence of DADVI approximations when the problem
is {\em approximately} quadratic.

We first derive the exact variational objective function and its optimum. Recall
our notation of \cref{sec:setup}, in which the variational posterior mean is
denoted $\etamu$ and the log standard deviation is denoted $\etaxi$.  For
compactness, we additionally write the vector of variational standard deviation
parameters as $\etasigma = \exp(\etaxi)$, where $\exp(\cdot)$ is applied
component-wise. Let $\etasigma^2$ be the corresponding vector of variance
parameters. Note that
\begin{align*}
\expect{\q(\theta \vert \eta)}{\theta} ={}& \etamu
\quad\textrm{and}\quad
\expect{\q(\theta \vert \eta)}{\theta \theta^\trans} ={}
    \etamu \etamu^\trans + \diag{\etasigma^2}, \\
\textrm{so } \quad
\klfullobj{\eta} ={}&
    \frac{1}{2} \etamu^\trans A \etamu
    -\frac{1}{2} \etasigma^\trans (A \odot I_{\thetadim}) \etasigma
    - B^\trans \mu
    -\sumd \log \etasigma[d].
\end{align*}
The exact optimal parameters are thus
\begin{align*}
\etamustar = A^{-1}B
\quad\textrm{and}\quad
\etasigmastar[d] = (A_{dd})^{-1/2}.
\end{align*}
\def\zbar{\bar{z}}
\def\zzbar{\overline{zz^\trans}}
\def\sigmat{S}
\def\zcov{\hat{\Sigma}_{z}}
If the objective had arisen from a multivariate normal posterior, observe the
variational approximation to the mean is exactly correct, but the covariances
are, in general, mis-estimated, since $1 / A_{dd} \ne (A^{-1})_{dd}$ unless the
true posterior covariance is diagonal.

We next make an asymptotic argument that we can expect any particular DADVI output
to be a good estimate of the optimum of the exact objective, even for a small $\znum$.
\begin{proposition}\label{prop:normal_accurate}
Consider any parameter dimension index $d \in \{1, \ldots, \thetadim\}$,
selected independently of $\Z$. In the quadratic model, we have
$\etasigmahat[d]^{-2} - \etasigmastar[d]^{-2}= O_p(\znum^{-1/2})$ and
$\etamuhat[d] - \etamustar[d] = O_p(\znum^{-1/2})$. The constants do not depend
on $\thetadim$.
\end{proposition}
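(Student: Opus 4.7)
The plan is to exploit the quadratic form of $\logjoint$ to reduce the DADVI first-order conditions to an essentially closed-form system in the sample moments $\zbar = N^{-1}\sum_n z_n$ and $\zzbar = N^{-1}\sum_n z_n z_n^\trans$, and then apply CLT-type arguments to those moments. First I would substitute $\theta(\eta, z) = \etamu + z \odot \etasigma$ into $\klobj{\eta \vert \Z}$ and use the quadratic form of $\logjoint$ together with linearity of sums over $n$ to obtain an exact, quadratic-in-$(\etamu, \etasigma)$ expression for the DADVI objective. The first-order condition in $\etamu$ at fixed $\etasigma$ is linear and solves explicitly to $\etamuhat = \etamustar - \zbar \odot \etasigmahat$, with $\etamustar = A^{-1}B$.

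The claim on $\etamu$ then reduces to the identity $\etamuhat[d] - \etamustar[d] = -\zbar_d\, \etasigmahat[d]$. The scalar $\zbar_d$ is an average of $N$ i.i.d.\ standard normals, so $\sqrt{N}\,\zbar_d \sim \mathcal{N}(0,1)$, a distribution that is independent of $\thetadim$. Combined with the $\etasigma$ claim below, which gives $\etasigmahat[d] = \etasigmastar[d] + O_p(N^{-1/2})$, this yields $\etamuhat[d] - \etamustar[d] = O_p(N^{-1/2})$ with a constant controlled by $\etasigmastar[d] = A_{dd}^{-1/2}$ alone.

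For the $\etasigma$ claim, substituting $\etamuhat = \etamustar - \zbar \odot \etasigmahat$ into the $\etasigma$-FOC cancels the cross terms and reduces it to
\begin{align*}
\etasigmahat[d]^{-2} = A_{dd}\,\sigmat_{dd} + \sum_{d'\neq d} A_{dd'}\,\sigmat_{dd'}\,\frac{\etasigmahat[d']}{\etasigmahat[d]},
\end{align*}
where $\sigmat := \zzbar - \zbar\zbar^\trans$ is the (biased) sample covariance of the draws. Setting $\sigmat = I$ recovers the population identity $(\etasigmastar[d])^{-2} = A_{dd}$, so I would treat $\etasigmahat$ as an implicit smooth function of $\sigmat$ in a neighborhood of $\sigmat = I$ and invoke the implicit function theorem. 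A first-order expansion then produces
\begin{align*}
\etasigmahat[d]^{-2} - (\etasigmastar[d])^{-2} = A_{dd}(\sigmat_{dd} - 1) + \sum_{d'\neq d} A_{dd'}\,\sigmat_{dd'}\,\frac{\etasigmastar[d']}{\etasigmastar[d]} + o_p(N^{-1/2}).
\end{align*}
The diagonal contribution $A_{dd}(\sigmat_{dd}-1)$ is a scalar chi-squared fluctuation with $\sqrt{N}(\sigmat_{dd}-1) \Rightarrow \mathcal{N}(0, 2)$, of size $O_p(N^{-1/2})$ with a constant depending only on $A_{dd}$.

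The hard part will be the off-diagonal sum, which naively contains $\thetadim - 1$ terms each of size $O_p(N^{-1/2})$. The decisive observation is that the entries $\{\sigmat_{dd'}\}_{d'\neq d}$ in a fixed row of a Gaussian sample covariance matrix are pairwise uncorrelated with variance $N^{-1}$, so the sum's variance is $N^{-1}\sum_{d'\neq d} A_{dd'}^2\,(\etasigmastar[d'])^2/(\etasigmastar[d])^2 = (N A_{dd})^{-1}\sum_{d'\neq d} A_{dd'}^2/A_{d'd'}$, a quantity determined entirely by the problem instance $A$ and not by $\thetadim$ explicitly. This is where a poor structure of $A$ could in principle reintroduce a dimension dependence, and controlling this row-sum uniformly in $d$ is precisely what delivers the dimension-free constant asserted in the proposition.
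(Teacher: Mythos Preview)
Your derivation of the mean first-order condition and the identity $\etamuhat = \etamustar - \etasigmahat \odot \zbar$ matches the paper exactly, and your reduction of the $\etamu$ claim to control of $\etasigmahat[d]$ is correct.

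For $\etasigmahat$, however, your route diverges from the paper's and leaves a real gap. Your coupled first-order condition and its linearization in $\sigmat - I$ are fine (though the variance constant should be $A_{dd}\, N^{-1}\sum_{d'\neq d} A_{dd'}^2 / A_{d'd'}$, not $(N A_{dd})^{-1}\sum\ldots$, since $(\etasigmastar[d'])^2/(\etasigmastar[d])^2 = A_{dd}/A_{d'd'}$). The problem is exactly where you flag it: the off-diagonal contribution $\sum_{d'\neq d} A_{dd'}\,\sigmat_{dd'}\,\etasigmastar[d']/\etasigmastar[d]$ has a variance given by a sum over $\thetadim - 1$ terms, and for a generic dense $A$ this row-sum grows with $\thetadim$. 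Saying the constant is ``determined entirely by the problem instance $A$'' is not a resolution, because $A$ is itself $\thetadim \times \thetadim$; the proposition asserts a constant independent of $\thetadim$, and your linearization cannot deliver one that depends only on $A_{dd}$ without an extra structural assumption on $A$ that the statement does not make. This is a gap in the argument, not merely a technicality you can close later.

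The paper bypasses the implicit-function route entirely. After profiling out $\etamu$ to obtain $\klobj{\etasigma} = \tfrac{1}{2}\trace{A S (\zzbar - \zbar\zbar^\trans) S} - \sum_d \log\etasigma[d] + \const$, it writes $A = RR$ with $R$ the symmetric square root and uses the rotational invariance of the standard Gaussian (via $RS\z_n \dequal (RSSR)^{1/2}\z_n$) to argue a distributional equality for the profiled objective, from which it extracts $\etasigmahat[d]^{-2} \dequal (Q/\znum)\, A_{dd}$ with $Q \sim \chi^2_{\znum-1}$. This depends on $A$ only through the single scalar $A_{dd}$ and is therefore dimension-free by construction. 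That distributional reduction to a single scaled chi-squared is the key idea your proposal is missing; a Taylor expansion in $\sigmat$ necessarily carries the entire $d$-th row of $A$ and cannot recover a constant depending on $A_{dd}$ alone.
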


\begin{proof}
We can compare the optimal parameters with the DADVI estimates.  Let $\zbar :=
\meann \z_n$.  Let $\dequal$ denote equality in distribution and let $Q \sim
\chi^2_{\znum-1}$ denote a chi-squared random variable with $\znum-1$ degrees of
freedom. We show in \cref{app:normal_accurate_proof} that, irrespective of the
dimension of the problem,
\begin{align*}
\etamuhat = \etamustar - \etasigmahat \odot \zbar
\quad\textrm{and}\quad
\etasigmahat[d] \dequal \left(\frac{Q}{\znum} A_{dd} \right)^{-1/2}.
\end{align*}
So $\expect{\normz}{\etasigmahat[d]^{-2}} = \frac{\znum - 1}{\znum} A_{dd} =
\frac{\znum - 1}{\znum} \etasigmastar[d]^{-2}$, and $\etasigmahat[d]^{-2} -
\etasigmastar[d]^{-2}= O_p(\znum^{-1/2})$.  It follows that $\etamuhat[d] -
\etamustar[d] = O_p(\znum^{-1/2})$ as well.
\end{proof}

The next remark suggests that, in cases with $N \ll \thetadim$, the
worst-estimated linear combination of means is poorly estimated; note that in
choosing the worst case we can overfit the draws $\Z$. It follows that the
behaviors for any particular element of $\etamuhat$ and $\etasigmahat$ above do
not imply that DADVI performs uniformly well across all parameters. 
\begin{remark} \label{rem:worst_case}
In the quadratic model, we have
\begin{align*}
\expect{\normz}{
    \sup_{\nu: \norm{\nu}_2 = 1}
    \nu^\trans \frac{\etamuhat - \etamustar}{\etasigmahat}} =
\expect{\normz}{
    \sup_{\nu: \norm{\nu}_2 = 1} \nu^\trans \zbar} =
    \expect{\normz}{\sqrt{\zbar^\trans \zbar}} \approx
    \sqrt{\frac{\thetadim}{N}}.
\end{align*}
In the first term of the preceding display, the division in the term $(\etamuhat
- \etamustar) / \etasigmahat$ is elementwise. The final relation follows since
$\sqrt{N} \zbar$ is a $\thetadim$-dimensional standard normal, so $N
\zbar^\trans \zbar$ is a $\chi^2_{\thetadim}$ random variable.
\end{remark}

Finally, we show that the LR covariances reported from DADVI
are exact, regardless of how small $\znum$ is or, indeed, the particular values of $\Z$.
Recall that, by contrast, the exact mean-field variance estimates are
notoriously unreliable as estimates of the posterior variance. 
\begin{proposition}\label{prop:mvn_lr}
In the quadratic model, we have
\begin{align*}
\lrcov{\q(\theta \vert \etahat)}{\theta} =
\fracat{d \etamuhat}{d t^\trans}{\etahat} = A^{-1},
\end{align*}
with no $\Z$ dependence.
\end{proposition}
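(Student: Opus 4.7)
The plan is to compute the derivative $d\etamuhat(t)/dt^\trans$ directly by solving the DADVI first-order conditions for the tilted problem in closed form, and then recognize this derivative as the LR covariance by the definition in \cref{eq:lr_def} together with the fact that $\expect{\q(\theta \vert \eta)}{\theta} = \etamu$.

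First I would set up the tilted model. Since $\log \p(\theta, \y, t) = -\frac{1}{2}\theta^\trans A\theta + (B+t)^\trans \theta$, tilting only replaces $B$ by $B + t$; the quadratic part in $\theta$ is untouched. Substituting the reparameterization $\theta(\eta, z) = \etamu + z \odot \etasigma$ into the tilted log joint and averaging over $\Z$, the tilted DADVI objective differs from the untilted one only by the additional term $-t^\trans(\etamu + \zbar \odot \etasigma)$. The first-order condition in $\etamu$ then yields
\begin{align*}
\etamuhat(t) = A^{-1}(B + t) - \zbar \odot \etasigmahat(t).
\end{align*}

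The key step is to argue that $\etasigmahat(t)$ does not depend on $t$. I would do this by profiling out $\etamu$ using the display above and checking that the residual objective in $\etasigma$ has no remaining $t$-dependence: the quadratic form $\meann (\etamu + z_n \odot \etasigma)^\trans A (\etamu + z_n \odot \etasigma)$ decomposes into $(B+t)^\trans A^{-1}(B+t)$ plus a term in $\etasigma$ alone (because the cross terms in $z_n - \zbar$ vanish on averaging), and this exactly cancels the linear and constant contributions of $t$ upon substituting the profile for $\etamu$. Hence $\etasigmahat(t) \equiv \etasigmahat$, and so $\etamuhat(t) = \etamuhat + A^{-1}t$, giving $d\etamuhat(t)/dt^\trans\big|_{t=0} = A^{-1}$ with no dependence on $\Z$.

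Finally, I would close by noting that $\expect{\q(\theta \vert \etahat(t))}{\theta} = \etamuhat(t)$, so by the definition of LR covariances (\cref{eq:lr_def}) applied componentwise in $\phi_1(\theta) = \phi_2(\theta) = \theta$, the matrix $d\etamuhat(t)/dt^\trans\big|_{t=0}$ is exactly $\lrcov{\q(\theta \vert \etahat)}{\theta}$. The main potential obstacle is the profiling step verifying $\partial \etasigmahat / \partial t = 0$; everything else is forced by the linearity of the tilt and the explicit solution already used in the proof of \cref{prop:normal_accurate}.
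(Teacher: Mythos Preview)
Your proposal is correct and follows essentially the same approach as the paper: both arguments tilt the model by replacing $B$ with $B+t$, invoke the closed-form solution $\etamuhat(t)=A^{-1}(B+t)-\etasigmahat(t)\odot\zbar$ from the first-order condition (the paper's \cref{eq:normal_muhat}), and then observe that the profiled objective in $\etasigma$ (the paper's \cref{eq:normal_klhat_sigma_profile}) contains no $B$- and hence no $t$-dependence, so $\etasigmahat(t)\equiv\etasigmahat$ and $d\etamuhat/dt^\trans=A^{-1}$. Your sketch of the profiling step is exactly what the paper carries out explicitly in \cref{app:normal_accurate_proof} and then cites by equation number in its proof of \cref{prop:mvn_lr}.
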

See \cref{app:mvn_lr_proof} for a proof.
Since $A^{-1}$ is in fact the exact posterior variance,
the linear response covariance is exact in this case irrespective of how small
$\znum$ is, in contrast to $\etasigmastar$, which can be a poor estimate of
the marginal variances unless $A$ is diagonal.

\subsection{High dimensional local variables}\label{sec:high_dim_global_local}
\def\err{\mathcal{E}}
\newcommand{\f}{f} 
\newcommand{\fbar}{\bar{f}} 
\newcommand{\fhat}{\hat{f}} 
\newcommand{\ftil}{\tilde{f}} 

We next show that the number of samples required for DADVI estimation grows only
logarithmically in dimension when the target joint distribution can be written
as a large number of nearly independent problems that share a single,
low-dimensional global parameter. 

Formally, we say a problem has a ``global--local'' structure if we have the
following decomposition:\footnote{Each local parameter is, itself, a vector, so
we use superscripts to distinguish local parameters, retaining subscripts for
particular elements of vectors.}
\begin{align}\label{eq:global_local}
\theta = \begin{pmatrix}
\gamma\\
\lambda^1 \\
\vdots\\
\lambda^P
\end{pmatrix}
\quad\textrm{and}\quad
\logjoint = \sump \ell^p(\gamma, \lambda^p) + \ell^\gamma(\gamma),
\end{align}
where $\lambda^p \in \rdom{\lambdadim}$ and $\gamma \in \rdom{\gammadim}$, and
any data dependence is implicit in the functions $\ell^p$ and $\ell^\gamma$.
Here, the ``global'' parameters $\gamma$ are shared among all ``observations,''
and the ``local'' $\lambda^p$ parameters do not occur with one another.  We
assume that the dimensions $\gammadim$ and $\lambdadim$ are small, but that the
total dimension $\thetadim = \gammadim + P \lambdadim$ is large because $P$ is
large, i.e., because there are many local parameters.

Each vector involved in the variational approximation --- the variational
parameter $\eta$, the variational mean $\etamu$ and standard deviation
$\etasigma$, the normal random variables $\z$, and the sets $\Z$ of normal
random variables --- can be partitioned into sub-vectors related to the global
and local parameters. We will denote these subvectors with $\gamma$ and $p$
superscripts, respectively, so that, e.g., $\eta^\trans = (\eta^\gamma, \eta^1,
\ldots, \eta^p, \ldots, \eta^P)$, and so on.  We will write $\etadom^\gamma$ for
the domain of $\eta^\gamma$ and $\etadom^p$ for the domain of $\eta^p$.

If there were no global parameters $\gamma$, then the high dimensionality would
be no problem for DADVI.  Without shared global parameters, the variational
objective would consist of $P$ completely independent $\lambdadim$-dimensional
sub-problems.  According to the classical optimization results referred to at
the beginning of this section (e.g. \citet[Chapter 5]{shapiro:2021:lectures}),
under typical regularity conditions, each of these sub-problems' solutions could
be accurately approximated with DADVI using no more than $\znum=O(\lambdadim)$
standard normal draws, each of length $\lambdadim$.  The corresponding $\Z$ for
the combined problem would stack the $\znum$ vectors for each sub-problem,
resulting in a $\Z$ consisting again of only $\znum$ standard normal draws, each
of length $P \lambdadim$.  For this combined problem, any particular posterior
mean of the combined problem (chosen independently of $\Z$) would then be
well-estimated using only $\znum$ draws, although we would expect more
adversarial quantities such as $\max_{p} \sup_{v:\norm{v}_2=1}
v^\trans\left(\hat{\eta}^p - \etastar^{p}\right)$ to be poorly estimated, as we
saw in the quadratic problem (see \cref{rem:worst_case} in
\cref{sec:high_dim_normal} above).

The goal of the present section is to state conditions under which the extra
dependence induced by the shared finite-dimensional global parameter does not
depart too strongly from the fully independent case described in the preceding
paragraph.  Our two key assumptions, stated respectively in
\cref{assu:local_ulln,assu:local_minimum} below, are that each local problem
obeys a sufficiently strong uniform law of large numbers, and that the local
problems do not, in a certain sense, provide contradictory information about the
global parameters.

To state our assumptions, let us first introduce some notation.  Similar to around
\cref{eq:reparameterization}, we write $\gamma(\eta^\gamma, \z^\gamma) =
\etamu^\gamma + \exp(\etaxi^\gamma) \odot \z^\gamma$, with analogous notation
for $\lambda^p(\eta^p, \z^p)$.  

Our first step is to write the variational objectives as the sum of
``local objectives.''
\begin{defn}\label{defn:local_objective}
Define the ``local objective''
\begin{align*}
\f^p(\eta^\gamma, \z^\gamma, \eta^p, \z^p) :={}&
    \ell^p\left(\gamma(\eta^\gamma, \z^\gamma),
                \lambda^p(\eta^p, \z^p)\right) +
    \sum_{d=1}^{\lambdadim} {\etaxi}^p_d +
    \frac{1}{P} \left(
        \ell^\gamma(\gamma(\eta^\gamma, \z^\gamma)) +
            \sum_{d=1}^{\gammadim} {\etaxi}^\gamma_d
    \right).
\end{align*}
We then define its expected value $\fbar^p$ and corresponding sample
approximation $\fhat^p$:
\begin{align*}
\fbar^p(\eta^\gamma, \eta^p) :={}&
    \expect{\normz}{\f^p(\eta^\gamma, \z^\gamma, \eta^p, \z^p)}, 
&\fhat^p(\eta^\gamma, \Z^\gamma, \eta^p, \Z^p) :={}&
\meann \f^p(\eta^\gamma, \z^\gamma_n, \eta^p, \z^p_n).
\end{align*}
\end{defn}
With these definitions in hand, we observe that the mean-field objective
for this model and its sample approximation can be written as functions
of the local quantities.
\begin{align*}
\klfullobj{\eta} ={}&
    -\sump \fbar^p(\eta^\gamma, \eta^p),
&
\klobj{\eta | \Z} ={}&
    -\sump \fhat^p(\eta^\gamma, \Z^\gamma, \eta^p, \Z^p).
\end{align*}

Our key assumption is that a sub-Gaussian uniform law of numbers (ULLN) applies
to each local objective.
%
\begin{assu}[A uniform law of large numbers applies to the local problems]
\label{assu:local_ulln}
Assume that, for any $\delta > 0$, there exist positive constants $C_1$, $C_2$,
and $N_0$ depending on $\lambdadim$ and $\gammadim$ but not on $P$ such that for
$\znum \ge N_0$,
\begin{align*}
\p\left(
\sup_{\left( \eta^\gamma, \eta^p \right) \in \etadom^{\gamma} \times \etadom^p}
\abs{\fhat^p(\eta^\gamma, \Z^\gamma, \eta^p, \Z^p) -
     \fbar^p(\eta^\gamma, \eta^p)} > \delta
\right)
\le
\varepsilon := C_1 \exp\left(-C_2 N_0 \right).
\end{align*}
\end{assu}
%
%
%
\begin{example}\label{ex:ulln_compact}
Recall the definition of the ``local objective'' given in
\cref{defn:local_objective}.  
Assume that $\etadom^{\gamma} \times \etadom^p$ is compact and $\fbar^p$ is
Lipschitz. Assume that, for all parameters in $\etadom^{\gamma} \times
\etadom^p$, the moment generating function of $\fhat^p(\eta^\gamma, \z^\gamma,
\eta^p, \z^p)$ is finite in a neighborhood of $0$, and that
$\var{\z}{\fhat^p(\eta^\gamma, \z^\gamma, \eta^p, \z^p)}$ is finite. Then
\citet[Theorem 12 and Equation 3.17]{shapiro:2003:montecarlosampling} implies that
\cref{assu:local_ulln} holds.\footnote{The connection between our notation and
Shapiro's is as follows. Shapiro's $\alpha$ is our $1 - \varepsilon$. Shapiro's
$\varepsilon$ is our $\delta$. Shapiro's $\delta = 0$ in our case because we
assume that $\etahat$ is an exact optimum. Shapiro's diameter $D$ is bounded
because $\etadom^{\gamma} \times \etadom^p$ is compact. Shapiro's $L$ is our
Lipschitz constant. Shapiro's $n$ is our $\gammadim + \lambdadim$. And Shapiro's
$\sigma^2_{\textrm{max}}$ is bounded by our assumption on the variance of
$\fhat^p$.  A similar but more detailed result can also be found in
\citet[Section 5.3.2]{shapiro:2021:lectures}.}
\end{example}
Though restrictive, the conditions of \cref{ex:ulln_compact} are those that give
rise to the commonly cited linear dimensional dependence for the SAA
\citep[e.g.][]{nemirovski:2009:sgdvsfixed, kim:2015:guidetosaa,
homem:2014:montecarlosaa}. Similar conditions to \cref{ex:ulln_compact} can be
also found in the statistics literature.  For example, \citet[Theorem
4.10]{wainwright:2019:high} provides a bound of the form in
\cref{assu:local_ulln} for bounded $f^p$ with Rademacher complexity that
decreases in $\znum$.
Note that ADVI objectives, like many maximum likelihood problems, are typically
over unbounded domains, with non-Lipschitz objective functions.  In such cases,
one can still use \cref{assu:local_ulln} by showing first that an estimator
converges suitably quickly to a compact set with high probability, and then use
\cref{assu:local_ulln} on that compact set; see, e.g., the discussion in Section
3.2.1 of \citet{wellner:2013:empiricalprocesses}.
Our present purpose is not to survey the extensive literature on circumstances
under which \cref{assu:local_ulln} holds, only to demonstrate simple,
practically relevant conditions under which the SAA does not suffer from the
worst-case dimensional dependence suggested by the SAA literature.



Next, we assume that the optima are well-defined for the local
problems.
%
%
\begin{assu}[A strict minimum exists] \label{assu:local_minimum}
Assume that there exists a strict optimum at $\etastar$ in the sense that there
exists a positive constant $C_3$, not depending on $P$, that satisfies
\begin{align*}
\klfullobj{\eta} - \klfullobj{\etastar} \ge
    P C_3 \norm{\eta^\gamma - \etastar^\gamma}_2^2
\quad\textrm{ and }\quad
\klfullobj{\eta} - \klfullobj{\etastar} \ge
    C_3 \sum_{p=1}^P \norm{\eta^p - \etastar^p}_2^2.
\end{align*}
\end{assu}
%

As illustrated by \cref{ex:local_minimum} below, a key aspect of
\cref{assu:local_minimum} is that each local objective function is informative
about the global parameter, so that as the dimension $P$ grows, the global
objective function grows ``steeper'' as a function of $\eta^\gamma$.

\begin{example}\label{ex:local_minimum}
Recall \cref{defn:local_objective}.  
Suppose that, for each $p$, the expected local objective $\fbar^p(\eta^\gamma,
\eta^p)$ is twice-differentiable and uniformly convex, in the sense that there
exists a lower bound $C_3 > 0$ on the eigenvalues of the second derivative matrices
of $\fbar^p(\eta^\gamma, \eta^p)$, uniformly in both $p$ and $\eta$.  Then, by a
Taylor series expansion,
\begin{align*}
\klfullobj{\eta} - \klfullobj{\etastar} \ge
    C_3 \left( P  \norm{\eta^\gamma - \etastar^\gamma}_2^2 +
    \sump \norm{\eta^p - \etastar^p}_2^2
    \right),
\end{align*}
from which \cref{assu:local_minimum} follows.  (See
\cref{app:high_dim_global_local} for more details.)
\end{example}
%

%
\begin{theorem}\label{thm:global_local}
Under \cref{assu:local_ulln,assu:local_minimum}, for any $\varepsilon > 0$ and
$\delta > 0$, there exists an $N_0$, depending only logarithmically on $P$,
such that $\znum \ge N_0$ implies that
\begin{align*}
\p\left(
\norm{\etahat^\gamma - \etastar^\gamma}_2^2 \le \delta
\textrm{ and, for all }p\textrm{, }
        \norm{\etahat^p - \etastar^p}_2^2 \le \delta
\right)
\ge 1- \varepsilon.
\end{align*}
\begin{proof}[sketch]
By \cref{assu:local_minimum}, closeness of $\fbar^p(\eta^\gamma, \eta^p)$ and
$\fhat^p(\eta^\gamma, \Z^\gamma, \eta^p, \Z^p)$ implies closeness of $\etahat^p$
and $\etastar^p$, and closeness of $\frac{1}{P} \klobj{\eta | \Z}$ and
$\frac{1}{P} \klfullobj{\eta}$ implies closeness of $\etahat^\gamma$ and
$\etastar^\gamma$.  Thus, for $\etahat$ to be close to $\etastar$, it suffices
for $\abs{\fbar^p(\eta^\gamma, \eta^p) - \fhat^p(\eta^\gamma, \Z^\gamma, \eta^p,
\Z^p)} < \delta'$ simultaneously for all $p$, and for some $\delta'$ that is a function of
$\delta$ and the constants in \cref{assu:local_minimum}. To apply a union bound
to \cref{assu:local_ulln} requires decreasing $\varepsilon$ by a factor of $P$,
which requires increasing $\znum$ by a factor of no more than $\log P$.

See \cref{app:high_dim_global_local} for a detailed proof.
\end{proof}
\end{theorem}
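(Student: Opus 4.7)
The plan is to follow the three-phase outline in the given sketch, exploiting the fact that the SAA objective decouples into per-$p$ subproblems given $\eta^\gamma$.

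\emph{Phase~1 (simultaneous ULLN via union bound).} I would apply \cref{assu:local_ulln} separately to each $p \in \{1,\ldots,P\}$ and take a union bound. Because the failure probability in \cref{assu:local_ulln} is sub-exponential, $\varepsilon = C_1 \exp(-C_2 N_0)$, replacing $\varepsilon$ by $\varepsilon/P$ costs only an additive $C_2^{-1}\log P$ increase in $N_0$; this is exactly where the logarithmic dependence on $P$ enters. On the resulting good event $\mathcal{E}$ of probability at least $1-\varepsilon$, the uniform bound $|\fhat^p(\eta^\gamma,\Z^\gamma,\eta^p,\Z^p) - \fbar^p(\eta^\gamma,\eta^p)| \le \delta'$ holds simultaneously for all $p$ and all $(\eta^\gamma,\eta^p) \in \etadom^\gamma \times \etadom^p$, where $\delta'$ is chosen at the end in terms of the target $\delta$ and the constant $C_3$ in \cref{assu:local_minimum}.

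\emph{Phase~2 (global parameter).} Summing the Phase~1 bound over $p$ yields $|\klobj{\eta|\Z} - \klfullobj{\eta}| \le P\delta'$ uniformly in $\eta$, so that $\frac{1}{P}|\klobj{\eta|\Z} - \klfullobj{\eta}| \le \delta'$. Combined with optimality of $\etahat$ for $\klobj{\cdot|\Z}$, the standard argmin-stability argument gives $\klfullobj{\etahat} - \klfullobj{\etastar} \le 2P\delta'$. The first inequality of \cref{assu:local_minimum} then yields $P C_3 \|\etahat^\gamma - \etastar^\gamma\|_2^2 \le 2 P \delta'$, and the factors of $P$ cancel to give $\|\etahat^\gamma - \etastar^\gamma\|_2^2 \le 2\delta'/C_3$. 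Choosing $\delta'$ small enough (in terms of $\delta$ and $C_3$) handles the global component.

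\emph{Phase~3 (local parameters).} Given any fixed $\eta^\gamma$, the SAA objective decouples across $p$, so $\etahat^p = \argmax_{\eta^p} \fhat^p(\etahat^\gamma,\Z^\gamma,\eta^p,\Z^p)$ solves a purely per-$p$ problem. Introducing the profile optimum $\eta^{p\ast}(\eta^\gamma) := \argmax_{\eta^p} \fbar^p(\eta^\gamma,\eta^p)$, note that $\etastar^p = \eta^{p\ast}(\etastar^\gamma)$. I would extract per-$p$ strong concavity of $\fbar^p(\eta^\gamma,\cdot)$ near its optimum by applying \cref{assu:local_minimum} with $\eta$ chosen to differ from $\etastar$ only in a single coordinate block (which is a legitimate use of the assumption, since it holds for all $\eta$). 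Combined with Phase~1, this gives $\|\etahat^p - \eta^{p\ast}(\etahat^\gamma)\|_2^2 \lesssim \delta'$. The implicit function theorem applied to the first-order condition $\partial_{\eta^p} \fbar^p(\eta^\gamma,\eta^{p\ast}(\eta^\gamma)) = 0$ shows $\eta^{p\ast}$ is Lipschitz in $\eta^\gamma$, so $\|\eta^{p\ast}(\etahat^\gamma) - \etastar^p\|_2^2 \lesssim \|\etahat^\gamma - \etastar^\gamma\|_2^2$, which is controlled by Phase~2. Triangle inequality closes the argument, and a union bound over $p$ incurs at most another logarithmic cost absorbed into $N_0$.

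\emph{Main obstacle.} The delicate step is Phase~3. \cref{assu:local_minimum}'s second inequality controls only the aggregate $\sum_p \|\etahat^p - \etastar^p\|_2^2$; a naive use would give per-$p$ bounds scaling as $P\delta'$, which destroys the logarithmic dimension dependence. Avoiding this requires exploiting the decoupled SAA structure to run the argmin-stability argument separately on each local subproblem, using Phase~1's uniform bound only once per $p$ rather than aggregating it across all $p$.
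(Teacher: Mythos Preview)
Your Phases~1 and~2 are essentially the paper's argument: the same union bound over $p$ in \cref{assu:local_ulln} (yielding the additive $C_2^{-1}\log P$ in $N_0$), the same argmin--stability decomposition for $\klfullobj{\etahat}-\klfullobj{\etastar}$, and the same cancellation of $P$ via the first inequality of \cref{assu:local_minimum} to control $\norm{\etahat^\gamma-\etastar^\gamma}_2^2$.

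Phase~3 is where you depart from the paper, and where there is a real gap. Your step~4 claims to obtain strong concavity of $\fbar^p(\eta^\gamma,\cdot)$ around its profile maximizer $\eta^{p\ast}(\eta^\gamma)$ \emph{at $\eta^\gamma=\etahat^\gamma$} by applying \cref{assu:local_minimum} with a single-block perturbation of $\etastar$. But that perturbation keeps the global block fixed at $\etastar^\gamma$, so it only gives a growth bound for $\fbar^p(\etastar^\gamma,\cdot)$ around $\etastar^p$; it says nothing about $\fbar^p(\etahat^\gamma,\cdot)$ around $\eta^{p\ast}(\etahat^\gamma)$, which is what your argmin--stability step at $\etahat^\gamma$ needs. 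Your step~5 is likewise unsupported: invoking the implicit function theorem to get Lipschitz continuity of $\eta^{p\ast}(\cdot)$ requires $\fbar^p$ to be continuously differentiable with suitably controlled (and $p$--uniform) second derivatives, none of which is part of \cref{assu:local_ulln} or \cref{assu:local_minimum}. Those hypotheses are purely a growth condition and a concentration bound, with no smoothness assumed.

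The paper avoids profile optima and the implicit function theorem altogether. It uses exactly the single-block perturbation you mention --- $\eta$ agreeing with $\etastar$ everywhere except the $p$-th local block, which is set to $\etahat^p$ --- to read off directly from the second inequality in \cref{assu:local_minimum} that
\[
\fbar^p(\etastar^\gamma,\etastar^p)-\fbar^p(\etastar^\gamma,\etahat^p)\;\ge\;C_3\,\norm{\etahat^p-\etastar^p}_2^2,
\]
and then bounds the left-hand side by $2\sup_{(\eta^\gamma,\eta^p)}\abs{\fbar^p-\fhat^p}$ via the same decomposition used for the global objective, applied now to the single local objective $\fbar^p$. This compares $\etahat^p$ to $\etastar^p$ in one step, with no intermediate $\eta^{p\ast}(\etahat^\gamma)$ and no regularity beyond the stated assumptions. (The extra ``union bound over $p$'' you mention at the end of Phase~3 is also unnecessary: once Phase~1 holds, Phase~3 is deterministic on that event.)
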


The key difference between classical results such as \citet[Chapter
5]{shapiro:2021:lectures} and our \cref{thm:global_local} is that, in the
classical results, $\znum = O(P)$, whereas for \cref{thm:global_local}, $\znum =
O(\log P)$.  Intuitively, $\znum$ need grow only logarithmically in $P$ because
the global parameters are sharply identified, which approximately decouples the
remaining local problems.

\subsection{DADVI fails for full-rank ADVI}
\label{sec:dadvi_full_rank}
The preceding sections demonstrated that, in certain cases, DADVI can work well
to estimate the optimum of the mean-field ADVI problem even in high dimensions.
By contrast, we now show that DADVI will behave pathologically for the full-rank
ADVI problem in high dimensions unless a prohibitively large number of draws are
used. The intuition we develop for full-rank ADVI also extends to other highly
expressive variational approximations such as normalizing flows.

In forming the full-rank optimization problem, ADVI parameterizes $\q(\theta |
\eta)$ using a mean $\etamu$ and a $\thetadim \times \thetadim$ matrix $\etar$
in place of $\etasigma$.  Formally, the full-rank approximation taking
$\theta(\eta, \z) = \etamu + \etar \z$ in place of the mean-field
reparameterization is given in \cref{eq:reparameterization_theta}. Letting
$|\cdot|$ denote the matrix determinant, the KL divergence becomes
\begin{align}\label{eq:reparameterization_fr_kl}
    \klobj{\eta | \Z} := -\frac{1}{2} \log |\etar \etar^\trans| -
        \meann \log \p(\etamu + \etar \z_n, \y).
\end{align}
The preceding display can be compared with the corresponding mean-field
objective in \cref{eq:reparameterization_kl}.
For the present section, we will
write $\klobj{\eta | \Z} = \klobj{(\mu, \etar) | \Z}$.  Under this
parameterization, $\cov{\q(\theta \vert \eta )}{\theta} =
\cov{\normz}{\theta(\eta, \z)} = \etar \etar^\trans$, so the matrix $\etar$ can
be taken to be any square root of the covariance matrix of $\q(\theta \vert
\eta)$.  In practice, $\etar$ is typically taken to be lower-triangular (i.e., a
Cholesky decomposition), though the particular form of the square root used will
not matter for the present discussion.

Suppose we are attempting to optimize the full-rank ADVI problem with DADVI when
$\thetadim > \znum$, so that $\theta$ has more dimensions than there are draws
$\z_n$.  Our next result shows that, in such a case, DADVI will behave
pathologically.

\begin{theorem}\label{thm:fradvi}
Consider a full-rank ADVI optimization problem with $\thetadim > \znum$. Then,
for any $\mu$, we have $\inf_{\etar} \klobj{(\mu, \etar) | \Z} = -\infty$, so
the DADVI estimate is undefined.
\begin{proof}
In the full-rank case, the objective function $\klobj{\eta | \Z}$ in
\cref{eq:reparameterization_fr_kl} depends on $\etar$ only through the products
$\etar \z_n$ and the entropy term, which is $\frac{1}{2} \log |\etar
\etar^\trans| = \log |\etar|$.  Since $\znum < \thetadim$, we can write $\etar =
\etar^{\Z} + \etar^{\perp}$, where $\etar^{\Z}$ is a rank-$\znum$ matrix
operating on the subspace spanned by $\Z$ and $\etar^\perp$ is a
rank-$(\thetadim - \znum)$ matrix satisfying $\etar^\perp z_n = 0$ for all
$n=1,\ldots,\znum$. Then we can rewrite the DADVI objective as
\begin{align}
\klobj{\eta | \Z}  = -\log |\etar^{\Z} + \etar^{\perp}| -
    \meann \log \p(\etamu + \etar^\Z \z_n, \y). 
    \label{eq:full_rank_dadvi_objective}
\end{align}
Since $\sup_{\etar^{\perp}} \log |\etar^{\Z} + \etar^{\perp}| = \infty$, the
result follows.\footnote{Recall that the log determinant is the sum of the logs of the
eigenvalues of $\etar$, which can be made arbitrarily large as $\etar^{\perp}$
varies freely.}
\end{proof}
\end{theorem}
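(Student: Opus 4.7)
My plan is to exploit the fact that when $\thetadim > \znum$, the draws $\{\z_1, \ldots, \z_\znum\}$ span at most an $\znum$-dimensional subspace $V \subset \rdom{\thetadim}$, leaving an orthogonal complement $V^\perp$ of dimension at least $\thetadim - \znum \ge 1$. The matrix $\etar$ acts on this whole space, but the only way $\etar$ enters the data-fitting term $-\meann \log \p(\etamu + \etar \z_n, \y)$ is through the products $\etar \z_n$, all of which are completely determined by how $\etar$ acts on $V$. Thus the restriction of $\etar$ to $V^\perp$ is free, and the only remaining term that depends on it is the log-determinant entropy term.

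Concretely, I would pick an orthogonal basis $[Q_V \mid Q_{V^\perp}]$ of $\rdom{\thetadim}$ adapted to the split $V \oplus V^\perp$, so that $\etar \z_n$ depends only on $\etar Q_V$. I would then fix $\etar Q_V$ to be any value (e.g.\ the value from an arbitrary reference $\etar_0$) and consider the family $\etar(\tau) = \etar_0 P_V + \tau \, \etar_0 P_{V^\perp}$, where $P_V$ and $P_{V^\perp}$ are the orthogonal projections. For each $\tau$, we have $\etar(\tau)\z_n = \etar_0 \z_n$, so the data-fitting term is independent of $\tau$. Since $Q$ is orthogonal, writing $\etar(\tau)$ in this basis gives $|\etar(\tau)| = \pm\det\bigl([\etar_0 Q_V \mid \tau\, \etar_0 Q_{V^\perp}]\bigr) = \tau^{\thetadim - \znum} |\etar_0|$, so (choosing $\etar_0$ invertible) the entropy term $-\frac{1}{2}\log|\etar(\tau)\etar(\tau)^\trans| = -\log|\etar(\tau)| - C$ tends to $-\infty$ as $\tau \to \infty$.

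Combining these two observations, $\klobj{(\mu, \etar(\tau)) | \Z}$ equals a constant (in $\tau$) plus the entropy term, which tends to $-\infty$. Since $\mu$ played no role in the argument, this holds for any fixed $\mu$, giving $\inf_{\etar} \klobj{(\mu, \etar) | \Z} = -\infty$ and thus an undefined DADVI estimate.

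The only real subtlety is making sure the decomposition $\etar = \etar^\Z + \etar^\perp$ (equivalently, $\etar P_V + \etar P_{V^\perp}$) is set up so that (i) $\etar^\perp \z_n = 0$ exactly, and (ii) varying $\etar^\perp$ alone can drive $|\etar \etar^\trans|$ to $\infty$. Both are immediate once one uses the orthogonal projectors onto $V$ and $V^\perp$, so I do not expect any genuine obstacle; the proof is essentially a dimension-count observation dressed up with a clean basis choice.
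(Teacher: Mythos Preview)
Your proposal is correct and follows essentially the same approach as the paper: both exploit that the data term depends on $\etar$ only through $\etar \z_n$, decompose $\etar$ via the span of $\Z$ and its orthogonal complement, and then note that the component acting on the complement can be scaled freely to drive the log-determinant (and hence the objective) to $-\infty$. Your version is slightly more explicit in that you exhibit a concrete one-parameter family $\etar(\tau)$ and compute its determinant, whereas the paper simply asserts $\sup_{\etar^{\perp}} \log |\etar^{\Z} + \etar^{\perp}| = \infty$; the minor caveat that $\dim V$ may be strictly less than $\znum$ only strengthens the exponent and does not affect the conclusion.
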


What will happen, in practice, if one tries to use DADVI in the full-rank case?
Denote the maximum a posteriori (MAP) estimate as $\hat\theta :=
\argmax_{\theta} \log \p(\theta, \y)$, and note that the first term on the right
hand side of \cref{eq:full_rank_dadvi_objective} is most negative when $\etamu =
\hat\theta$ and $\etar^\Z$ is the zero matrix.  A zero $\etar^\Z$ is
impermissible because, when $\etar^\Z$ is actually the zero matrix, then
$\etar^{\Z} + \etar^{\perp}$ is singular, and $\log |\etar^{\Z} + \etar^{\perp}|
= -\infty$.\footnote{ Indeed, if $\etar^\Z = 0$, then $\q(\theta \vert \eta)$
would have zero variance in any direction spanned by $\Z$, $\p(\theta, \y)$
would not be absolutely continuous with respect to $\q(\theta \vert \eta)$, and
the reverse KL divergence would be undefined.} However, for any $\varepsilon >
0$ and $M > 0$, we can take $\etar^\Z \z_n = \varepsilon \z_n$ and $\etar^\perp
v = M v$ for any $v \perp \Z$, so that $\etar$ is full-rank. 
When $\etamu = \hat\theta$, one can always decrease both terms on the right hand
side of \cref{eq:full_rank_dadvi_objective} via the following two-step
procedure. First, decrease $\varepsilon$ by any amount and thereby decrease the
first term. Second, given that $\varepsilon$, increase $M$ by a sufficient
amount to decrease the second term as well.

The degeneracy described in \cref{thm:fradvi} can be avoided if one uses at
least as many draws as there are model parameters, i.e., if $\znum \ge
\thetadim$.  However, based on the classical optimization results discussed
above (e.g. \citet[Chapter 5]{shapiro:2021:lectures}), one might expect
full-rank ADVI to require $\znum$ to be on the order of $\thetadim^2$, since the
full-rank variational parameters have dimension of order $\thetadim^2$ due to
the inclusion of a full-rank covariance matrix.  We proved above that the
classical dimension dependence of $\znum$ on the dimension of the variational
parameters is unnecessarily pessimistic for certain mean-field ADVI objectives.
It is an interesting question for future work to ask whether the classical
dimension dependence is also pessimistic for the full-rank approximation: that
is, whether DADVI for full-rank ADVI actually requires $\znum$ to be on the
order $\thetadim$ rather than $\thetadim^2$, or somewhere in between.

Finally, we note that the failure of DADVI in the full-rank case appears to be
indicative of a general phenomenon. Any smooth function mapping the columns of
$\Z$ into $\thetadom$ must span an $\znum$-dimensional sub-manifold of
$\thetadom$.  If a variational approximation is rich enough to increase the
entropy to an arbitrary degree on the complement of this submanifold, then DADVI
will lead to a degenerate solution. In this sense, it is in fact the
inexpressivity of the mean-field variational assumption that allows DADVI to
work in high dimensions.


\section{Related work}\label{sec:related_work}
As discussed above in \cref{sec:intro}, the idea of approximating an intractable
optimization objective $F(\eta) := \expect{\normz}{f(\eta, \z)}$ by
$\hat{F}(\eta | \Z) := \meann f(\eta, \z_n)$ is well-studied in the optimization
literature as the ``sample average approximation'' (SAA) \citep[][Chapter
5]{nemirovski:2009:sgdvsfixed,royset:2013:optimalsaa,
kim:2015:guidetosaa,shapiro:2021:lectures}. A key theoretical conclusion of the
optimization literature is that, in general, SAA should perform worse than SG in
high dimensions in terms of computational cost of providing an accurate optimum.
Our theoretical results of \cref{sec:high_dim} and experimental results of
\cref{sec:experiments} suggest that these general-case analyses may be unduly
pessimistic for many BBVI problems, though we believe more work remains to be
done establishing guarantees for SAA applied to BBVI in high dimensions.

The present work and the concurrent work by \citet{burroni:2023:saabbvi}
together form the first systematic studies of the accuracy of SAA for BBVI,
though the idea of applying SAA to BBVI has occurred several times in the
literature in the context of other methodological results
\citep{giordano:2018:covariances,domke:2018:importanceweightingvi,domke:2019:divideandcouplevi,
broderick:2020:automatic,wycoff:2022:sparsebayesianlasso,giordano:2023:bnp}. The
methods and experiments of \citet{burroni:2023:saabbvi} provide a complement to
our present work. \citet{burroni:2023:saabbvi} propose and study a method for
iteratively increasing the number of draws used for the SAA approximation until
a desired accuracy is reached (see also \citet{royset:2013:optimalsaa} for a
similar approach in the optimization literature); in contrast, we keep the
number of draws fixed in our theoretical analysis and our experiments.
Additionally, the models considered by \citet{burroni:2023:saabbvi} are
relatively low-dimensional, which allow the authors to use a very large number
of draws (up to $\znum = 2^{18}$) without incurring a prohibitive computational
cost.  In contrast, almost all of our experiments in \cref{sec:experiments} use
$\znum = \DADVINumDraws$; only in our investigation of Monte Carlo error in
\cref{sec:experiments_sampling_variability} do we examine changing $\znum$, and
there we consider only $\znum$ up to 64.  Studying relatively lower-dimensional
models with a large number of draws allows \citet{burroni:2023:saabbvi} to apply
SAA with the full-rank approximation (see our discussion of the SAA with the
full-rank approximation in \cref{sec:dadvi_full_rank}).  In contrast, we
emphasize the computation of LR covariances with the SAA approximation (as in
\citet{giordano:2018:covariances}) and on the use of DADVI in higher dimensions
more generally.  One could imagine combining our approaches: for example, by
computing the size of the sampling error relative to the LR covariance, and
increasing the number of draws as necessary as recommended in
\citet{burroni:2023:saabbvi}, though we leave such a synthesis for future work.


\section{Experiments}\label{sec:experiments}
We consider a range of models and datasets. We find that, despite using
out-of-the box optimization and convergence criteria, DADVI optimization (using
the SAA approximation) typically converges much faster than classical
(stochastic) ADVI. DADVI performs comparably to ADVI in posterior mean
estimation while allowing much better posterior covariance estimation via linear
response. Upon examination of optimization trajectories, we find that ADVI tends
to eventually find better ADVI objective values than DADVI but typically takes
longer to do so. And we confirm that the sampling variability estimates
available from DADVI are of high quality, even for just tens of draws.

Below, DADVI exhibits good performance on a number of high-dimensional models.
These models do not obviously satisfy any of the theoretical conditions for good
performance of the SAA established above (\cref{sec:high_dim}).  So our
experimental results point to a gap between theory and experiment that is an
interesting subject for future work.

In our experiments, as in the rest of the paper, we follow the convention that
``ADVI'' refers to methods that use stochastic optimization, and ``DADVI''
refers to our proposal of using SAA with the ADVI objective function.

\subsection{Models and data} \label{sec:models_data}
We evaluate DADVI and ADVI on the following models and datasets.
\begin{itemize}
    \item ARM: $\ARMNumModels$ models and datasets taken from a
    hierarchical-modeling textbook \citep{gelman:2006:arm}.  The datasets are
    relatively small and the models consist of textbook linear and generalized
    linear models, with and without random effects.
    \item Microcredit: A hierarchical model from development economics
    \citep{meager:2019:microcredit} that performs shrinkage on seven
    randomized controlled trials. The model accounts for heavy tails, asymmetric
    effects, and zero-inflated observations.
    \item Occupancy: A multi-species occupancy model from ecology
    \citep{ingram:2022:occupancy,kery:2009:speciesrichness}. In occupancy
    models, the question of interest is whether a particular species is present
    at (i.e.\ occupying) a particular site. The data consist of whether the
    species was observed at repeated visits to the site. At any given visit, the
    species may be present but not observed. Occupancy models estimate both (1)
    the suitability of a site as a function of environmental covariates such as
    temperature or rainfall and (2) the probability of observing the species
    given that it is present (the observation process). The resulting likelihood
    makes it a non-standard regression model and thus a good candidate for a
    black-box inference method. Here we use a multi-species occupancy model that
    places a hierarchical prior on the coefficients of the observation process.
    Our dataset comprises 1387 sites, 43 environmental covariates at each site,
    32 different species, and 2000 visits; this dataset represents a subset of
    the eBird dataset used by \cite{ingram:2022:occupancy}.\footnote{We used a
    subset so that our ground-truth MCMC method would complete in a reasonable
    amount of time.}
    \item Tennis: A Bradley-Terry model with random effects for ranking tennis
    players. In this model, each tennis player has a rating, assumed
    fixed throughout their career. The probability of a given player beating
    another is determined by the inverse logit of their rating difference.
    The ratings are modeled as random effects, and the data comprises all
    men's professional tennis matches on the ATP tour since 1969. Overall,
    this is a large dataset of 164,936 matches played between 5,013 different players,
    each of whom has their own random effect, making this a high-dimensional mixed model.
    \item POTUS: A time series polling model for the US presidential election
    \citep{heidemanns:2020:presidential}.  This model is both complex and
    high-dimensional. It models logit polling probabilities with a reverse
    autoregressive time series and random effects for various polling
    conditions.
\end{itemize}
Throughout this section, by a ``model'' we will mean a model with its
corresponding dataset.

\begin{table}[h!]
\begin{tabular}{|c|c|c|c|}
\hline\hline
Model Name  &   $\thetadim$     & NUTS runtime \\
\hline\hline
ARM ($\ARMNumModels$ models) &
$\ARMMinParamDim$ to $\ARMMaxParamDim$ (median $\ARMMedParamDim$)&
$\ARMMinNUTSSeconds$ seconds to $\ARMMaxNUTSMinutes$
    minutes (median $\ARMMedNUTSSeconds$ seconds) \\
\hline
Microcredit & $\MCParamDim$ & $\MCNUTSMinutes$ minutes\\
\hline
Occupancy & $\OccParamDim$ & $\OccNUTSMinutes$ minutes\\
\hline
Tennis & $\TennisParamDim$ & $\TennisNUTSMinutes$ minutes\\
\hline
POTUS & $\PotusParamDim$ & $\PotusNUTSMinutes$ minutes\\
\hline\hline
\end{tabular}
\caption{Model summaries.}
\label{tab:model_desc}
\end{table}

These models differ greatly in their complexity, as can be seen in
\cref{tab:model_desc}. The $\ARMNumModels$ ARM models from
\cite{gelman:2006:arm} are generally simple,\footnote{Indeed, many of the ARM
models can be fit quickly enough with MCMC that BBVI is arguably not necessary.
We include all the ARM models in our results to show that DADVI works well in
both simple and complex cases.} ranging from fixed effects models with a handful
of parameters to generalized linear mixed models with a few hundred parameters.
The other four models are more complex, with total parameter dimension
$\thetadim$ ranging from $\MCParamDim$ for the Microcredit model to
$\PotusParamDim$ for the POTUS model. We restricted attention to posteriors that
could be tractably sampled from with the NUTS MCMC algorithm
\citep{hoffman:2014:nuts} as implemented in PyMC \citep{salvatier:2016:pymc3} in
order to have access to ``ground truth'' posterior means and variances. However,
outside the relatively simple ARM models, NUTS samplers were time-consuming,
which motivates the use of faster variational approximations.

We fit each model using the following methods, including three different
versions of ADVI.
\begin{itemize}
\item NUTS: The ``no-U-turn'' MCMC sampler in PyMC
\citep{salvatier:2016:pymc3}. 
%
\item DADVI:  Except where otherwise indicated, we report results with
$\znum=\DADVINumDraws$ draws for DADVI for each model. We optimized using an
off-the-shelf second-order Newton trust region method (\texttt{trust-ncg} in
\texttt{scipy.optimize.minimize}).  Our implementation of DADVI is available as a
Python package at \url{https://github.com/martiningram/dadvi}.
\item LRVB: Using the optimum found by DADVI, we computed linear response
covariance estimates.  In the high-dimensional models Occupancy, Tennis, and
POTUS, we selected a small number of quantities of interest and used the
conjugate gradient (CG) algorithm to compute the LR covariances and frequentist
standard errors.  For Occupancy, the quantities of interest were predictions of
organism presence at $\OccNumCGParams$ sites; for Tennis the quantities of
interest were win predictions of $\TennisNumCGParams$ randomly chosen matchups;
and, for POTUS, the quantity of interest was the national vote share received by
the democratic candidate on election day.  When using the CG algorithm, we
preconditioned using the estimated variational covariance as described in
\cref{app:preconditioning}. When reporting metrics for the computational cost of
computing LRVB, we always report the total cost of the posterior
approximation --- i.e., the cost of DADVI optimization plus the additional cost of
computing the LR covariances.
\item Mean field ADVI (ADVI): We used the PyMC implementation of
ADVI, together with its default termination criterion. Every 100 iterations,
this termination criterion compares the current parameter vector with the one
100 iterations ago. It then computes the relative difference for each parameter
and flags convergence if it falls below $10^{-3}$. We ran ADVI for up to 100,000
iterations if convergence was not flagged before then.
\item RAABBVI (ADVI): RAABVI represents a state-of-the art stochastic
mean field ADVI method employing principled step size selection and convergence
assessment \citep{welandawe:2022:robustbbvi}.  To run RAABBVI, we used the
public package
\texttt{viabel},\footnote{\url{https://github.com/jhuggins/viabel}} provided
by \citet{welandawe:2022:robustbbvi}. By default, \texttt{viabel} supports the
packages \texttt{autograd} and \texttt{Stan}. To be able to run RAABBVI with
PyMC, we provide it with gradients of the objective function computed with
PyMC's \texttt{JAX} backend, which we use also for DADVI.
\item Full-rank ADVI (ADVI): When possible, we used the PyMC
implementation of full-rank ADVI, together with the default termination
criterion for ADVI described above.  Full-rank was computationally prohibitive
for all but the ARM and Microcredit models.
\end{itemize}
%
%

\subsection{Computational cost}\label{sec:experiments_runtime}
\RuntimeARM{}
\RuntimeNonARM{}

We first show that, despite using out-of-the box optimization and convergence
criteria, DADVI optimization typically converges faster than the ADVI
methods. DADVI also converges much more reliably; in many cases, the
ADVI methods either converged early according to their own criteria or failed to
converge and had to be terminated after a large, pre-determined number of draws.

We measured the computational cost of a method in two different ways: the wall
time (``runtime''), and the number of model gradient or Hessian-vector product
evaluations (``model evaluations'').  Neither is a complete measure of a
method's computational cost, and we hope to provide a more thorough picture by
reporting both.  For example, we were able to naively parallelize DADVI by
evaluating the model on each draw of $\Z$ in parallel, whereas ADVI uses a
single draw per gradient step and cannot be parallelized in this way. As a
consequence, DADVI will have a favorable runtime relative to ADVI for the same
number of model evaluations.

We included NUTS runtime results as a baseline.  We do not include model
evaluations for NUTS, since standard NUTS packages do not typically report the
number of model evaluations used for leapfrog steps that are not saved as part
of the MCMC output.

The results for ARM and non-ARM models are shown respectively in
\cref{fig:runtimes_arm_graph,fig:runtimes_nonarm_graph}.  Both DADVI and LRVB
are faster than all competing methods in terms of both runtime and model
evaluations on most models, with the exception of a small number of ARM models
and the Occupancy model. These computational benefits are favorable for DADVI
and LRVB given the results of \cref{sec:experiments_posterior_accuracy} below
showing that the posterior approximations provided by DADVI and LRVB are similar
to or better than the posterior approximations from the other methods.

\subsection{Posterior Accuracy}\label{sec:experiments_posterior_accuracy}
\PosteriorAccuracyARM{}
\PosteriorAccuracyNonARM{}

We next see that the quality of posterior mean estimates for DADVI and the ADVI
methods are comparable. The LRVB posterior standard deviations are much more
accurate than the ADVI methods, including full-rank ADVI. 

Each method produced a posterior mean estimate for each model parameter,
$\mu_\method$, and a posterior standard deviation estimate, $\sigma_\method$.
Above, we used $\mu$ to denote the posterior expectation of the full $\theta$
vector, but here we are using it more generically to denote a posterior
expectation of some sub-vector of $\theta$, or even the posterior mean of a
transformed parameter as estimated using Monte Carlo draws from the variational
approximation in the unconstrained space. We use the NUTS estimates, $\mu_\nuts$
and $\sigma_\nuts$, as the ground truth to which we compare the various
variational methods.  In order to form a common scale for the accuracy of the
posterior means and variances, we define the relative error in the posterior
mean and standard deviation as follows:
\begin{align*}
\muerr_\method :={} \frac{\mu_\method - \mu_\nuts}{\sigma_\nuts}
\quad\textrm{and}\quad
\sderr_\method :={} \frac{\sigma_\method - \sigma_\nuts}{\sigma_\nuts}.
\end{align*}
For example, if, on a particular parameter of a particular model, we find that
$\norm{\muerr_\dadvi} < \norm{\muerr_\sadvi}$, we would say that DADVI has
provided better mean estimates of that model parameter than mean-field ADVI. For
posterior covariances we will always report $\sderr_\lrvb$ rather than
$\sderr_\dadvi$, since we expect $\sigma_\dadvi$ to suffer from the same
deficiencies as the ADVI methods due to their shared use of the
mean-field approximation.

As discussed in \cref{sec:setup}, any parameters with restricted ranges will
typically be transformed before running ADVI. In our plots, then, we include one
point each for the original and transformed versions, respectively, of each
distinctly named parameter in the PyMC model. For Occupancy, Tennis, and POTUS,
we reported posterior mean accuracy measures for all parameters, but posterior
uncertainty measures only for a small number of quantities of interest. When a
named parameter is multi-dimensional, we report the norm of the error vector
over  all dimensions in order to avoid giving too much visual weight to a small
number of high-dimensional parameters.

The posterior accuracy results for ARM and the larger models are shown
respectively in \cref{fig:posterior_arm_graph,fig:posterior_nonarm_graph}. Recall that,
of
the non-ARM models, only the Microcredit model was small enough for full-rank
ADVI.

The estimates for the posterior means are comparable across methods, with
RAABBVI performing the best on average.  However, there are parameters for which
RAABBVI's mean estimates are off by up to a hundred standard deviations while
the DADVI estimates are fairly accurate.  In contrast, when the DADVI mean
estimates are severely incorrect, the RAABBVI ones are also severely incorrect.
This pattern suggests that severe errors in the DADVI posterior means are
primarily due to the mean-field approximation, whereas severe errors in ADVI
methods can additionally occur due to problems in optimization.

The LRVB posterior standard deviation estimates are almost uniformly better than
the ADVI and RAABBVI estimates based on the mean-field approximation. This
performance is not surprising since the mean field approximation is known to
produce poor posterior standard deviation estimates.\footnote{Note that the
relative standard deviation errors for ADVI tend to cluster around 1 because
MFVB posterior standard deviations tend to be under-estimated, and so a small
posterior standard deviation estimate leads to a relative error of one.}
Interestingly, for the ARM models, even the full-rank ADVI posterior covariance
estimates are worse than the LRVB covariance estimates, which is probably due to
the difficulty of optimizing the full-rank ADVI objective.

\subsection{Assessing convergence}\label{sec:experiments_convergence}
\TracesARM{}
\TracesNonARM{}

By examining the optimization traces, we next see that the ADVI methods
eventually find better optima (in terms of the variational objective) than
DADVI, but they typically take longer than DADVI to terminate, in agreement with
\cref{sec:experiments_runtime}.

In order to understand the progress of ADVI and RAABBVI towards their optimum,
we evaluated the variational objective on a set of 1000 independent
draws\footnote{We used the same set of independent draws for each method
to ensure a like-to-like comparison.} for each method along its optimization.
This evaluation is computationally expensive, but gives a good estimate of the
true objective $\klfullobj{\cdot}$ along the optimization paths. Specifically,
letting $\eta^{i}_{\method}$ denote the variational parameters for method
$\method$ after $i$ model evaluations, and letting $\Zindep$ denote the set of 1000
independent draws, we evaluated $\klobj{\eta^{i}_{\method} \vert \Zindep}$ for
each method and for steps $i$ up to convergence.

In order to place the optimization traces on a common scale, for each
method we center and scale the objective values by the DADVI optimum
and sampling standard deviation. In particular, we report $\kappa^i_{\method}$,
which is equal to
\begin{align}\label{eq:trace_normalization}
\kappa^i_{\method} :=
\frac{
        \klobj{\eta^{i}_{\method} \vert \Zindep} -
        \klobj{\etahat_{\dadvi} \vert \Zindep}}
    {\sqrt{\varhat{\Zindep}{\klobj{\etahat_{\dadvi} \vert \z}}}}
\end{align}
where $\varhat{\Zindep}{\klobj{\etahat_{\dadvi} \vert \z}}$ denotes an
approximation to $\var{\normz}{\klobj{\etahat_{\dadvi} \vert \z}}$ using the
sample variance over $\Zindep$.  Let $i^*_{\method}$ denote the number of model
evaluations taken by a method at convergence.  Then, under
\cref{eq:trace_normalization}, $\kappa^{i^*_{\dadvi}}_{\dadvi} = 1$ by
definition, $\kappa^{i^*_{\method}}_{\method} < 1$ indicates a better optimum at
convergence for $\method$ relative to DADVI, and $i^*_{\method} < i^*_{\dadvi}$
indicates faster convergence for $\method$ in terms of model evaluations
relative to DADVI.  The paths traced by $\kappa^i_{\method}$ may be
non-monotonic because the algorithms do not have access to $\Zindep$.

The optimization traces for ARM and non-ARM models are shown respectively in
\cref{fig:traces_arm_graph,fig:traces_nonarm_graph}, with suitably transformed
axes for easier visualization.  In many cases, the ADVI methods
eventually find better optima (in terms of the variational objective) than
DADVI, but ADVI typically takes longer to do so (the slower
convergence is also shown in
\cref{fig:runtimes_arm_graph,fig:runtimes_nonarm_graph}).  As can be seen on the
non-ARM models in \cref{fig:traces_nonarm_graph}, the ADVI methods
sometimes reach lower objective function values sooner than DADVI, but continue
to optimize because they do not have access to the computationally expensive
$\klobj{\eta^{i}_{\method} \vert \Zindep}$ and have not detected convergence
according to their own criteria.  Similarly, DADVI sometimes finds lower values
of $\klfullobj{\cdot}$ along its path to optimization, but does not terminate
because these points correspond to sub-optimal values of $\klobj{\cdot \vert
\Z}$.

The results in \cref{fig:traces_arm_graph,fig:traces_nonarm_graph} suggest the
possibility of initializing ADVI with DADVI and then optimizing further with
stochastic methods in cases when low values of the objective function are of
interest.  However, as seen in \cref{sec:experiments_posterior_accuracy} above,
lower values of the variational objective do not necessarily translate into
better posterior moment estimates.

\subsection{Sampling variability} \label{sec:experiments_sampling_variability}

\CoverageHistogram{}

We next show that frequentist standard error estimates from DADVI
provided good estimates of the sampling variability of the DADVI mean estimates, particularly
for $\znum \ge 32$.

As discussed in \cref{sec:mc_error_estimation}, the sampling variability of
DADVI estimates are straightforward to compute using standard formulas for the
sampling variability of M-estimators. For the DADVI mean estimates, we computed
the sampling standard deviation as described in
\cref{sec:mc_error_estimation,sec:lr_mc_computation}.\footnote{For the large
POTUS, Occupancy, and Tennis models, we used CG to compute frequentist coverage
for the same select quantities of interest for which we computed LR
covariances.} We denote by $\freqsd$ our estimate of
$\sqrt{\var{\normz}{\mu_\dadvi}}$ as computed using \cref{eq:mc_variance}, that
is, of the sampling standard deviation of the DADVI mean estimate under sampling
of $\Z$.  We can evaluate the accuracy of $\freqsd$ by computing $\mu_\dadvi$
with a large number of draws, which we denote as $\mu_\infty$, and checking
whether
\begin{align*}
\freqerr := \frac{\mu_\dadvi - \mu_\infty}{\freqsd}
\end{align*}
has an approximately standard normal distribution under many draws of
$\mu_\dadvi$.  We evaluated $\mu_\infty$ by taking the average of 100 runs with
$\znum = 64$ each.\footnote{The values shown in the $\znum = 64$ panel of
\cref{fig:coverage} are the same as those whose average was taken to estimate
$\mu_\infty$.  In theory, this induces some correlation between the $\freqerr$
values for $\znum = 64$.  However, the sampling variability of $\mu_\infty$ was
so small that the induced correlation is practically negligible.}

To evaluate whether $\freqerr$ has a normal distribution, we can take  $\Phi$ to
be the cumulative distribution function of the standard normal distribution, and
check whether $\Phi(\freqerr)$ has a uniform distribution.  Since the parameters
returned from a particular model are not independent under sampling from $\Z$,
the $\Phi(\freqerr)$ are not independent, and standard tests of uniformity like
the Kolmogorov-Smirnov test are not valid.
However, we can visually inspect the quality of the standard errors by checking
whether $\Phi(\freqerr)$ has an approximately uniform distribution, without
attempting to quantify how close it should be to uniform by chance alone. As can
be seen in \cref{fig:coverage}, for $\znum = 8$ and $\znum = 16$ the
$\Phi(\freqerr)$ values are over-dispersed to varying degrees for different
models; this behavior indicates that the sampling variance $\xi$ is under-estimated. In
contrast, the intervals provide good marginal coverage when $\znum \ge 32$,
though some over-dispersion remains in the Occupancy model.
%


\section{Conclusion}
In this paper, we proposed performing deterministic optimization on an approximate objective
instead of using traditional stochastic optimization 
on the intractable objective from the mean-field ADVI problem.
We found that using our DADVI approach can be faster, more accurate, and more automatic.
The benefits
of a deterministic objective can be attributed to the ability to use
off-the-shelf second-order optimization algorithms with simple convergence
criteria and linear response covariances.  Additionally, the use of a
deterministic objective allows computation of Monte Carlo sampling errors for
the resulting approximation. And these errors can facilitate an explicit tradeoff between computation
and accuracy.  In contrast to the worst-case analyses in the optimization
literature, we show theoretically that the number of samples needed for the
deterministic objective need not scale linearly in the dimension in types of
statistical models commonly encountered in practice.  Although a deterministic
objective cannot be used with highly expressive approximating families (such as
full-rank ADVI), there is reason to believe that deterministic objectives can
provide practical benefits for many black-box variational inference problems.

\section{Acknowledgements}
Ryan Giordano and Tamara Broderick were supported in part by an NSF CAREER Award
and an ONR Early Career Grant.  We are indebted to Ben Recht and Jonathan
Huggins for helpful discussions and suggestions.  We are also
grateful for the feedback from our anonymous reviewers.  All mistakes are our own.

\newpage
\bibliography{references.bib}

\newpage
\appendix

\section{Elaboration on the mean-field assumption}\label{app:mean-field}
In practice, the mean-field assumption in variational inference need not always
correspond to factorization over every single one-dimensional component of each
parameter. Rather, it often represents a factorization into individual
parameters as described in a model. For instance, consider a parameter within a
model that represents a distribution over $K$ outcomes, so that its elements are
positive and sum to one. A natural prior for such a parameter might be a
Dirichlet distribution. If this parameter exists as one parameter among multiple
parameters in our model, a mean-field assumption will typically provide a
separate factor for this parameter, but it will not further factorize across
components within the parameter. So $\Sigma(\eta)$ may, in fact, be
block-diagonal rather than purely diagonal, where each block size will
correspond to the size of a parameter. 

Researchers have explored other options between the extremes of the mean-field
and full-rank assumptions for Gaussian approximations within variational
inference; see, for instance, \citep{zhang:2022:pathfinder}.

\section{Behavior of high-dimensional normals}\label{app:high_dim_normal}

\def\zbar{\bar{z}}
\def\zzbar{\overline{zz^\trans}}
\def\sigmat{S}
\def\zcov{\hat{\Sigma}_{z}}

\subsection{Proof of \cref{prop:normal_accurate}}\label{app:normal_accurate_proof}
We begin by deriving the DADVI optimal estimates.  Let $\zbar :=
\meann \z_n$ and $\zzbar := \meann \z_n \z_n^\trans$.  Also, let $\sigmat :=
\diag{\etasigma}$, noting that $\sigmat v = \etasigma \odot v$ for any vector
$v$.
We can
write $\theta_n = \etamu + \sigmat \z_n$, so
\begin{align*}
\expecthat{\Z}{\theta(\z, \eta)} = \etamu + \sigmat \zbar
\quad\textrm{and}\quad
\expecthat{\Z}{\theta(\z, \eta) \theta(\z, \eta)^\trans} =
    \etamu \etamu^\trans +
    \etamu  \zbar^\trans \sigmat +
    \sigmat \zbar \etamu^\trans +
    \sigmat \zzbar \sigmat,
\end{align*}
so
\begin{align}\label{eq:mvn_dadvi_obj}
\klobj{\eta} ={}&
    \frac{1}{2} \etamu^\trans A \left(\etamu + 2 \sigmat \zbar \right) +
    \frac{1}{2} \trace{A \sigmat \zzbar \sigmat}
    -B^\trans (\etamu + \sigmat \zbar)
    -\sumd \log \etasigma[d].
\end{align}
For a fixed $\etasigma$ (and so a fixed $\sigmat$), the DADVI optimal
mean parameter then satisfies
\begin{align}\label{eq:normal_muhat}
A \left(\etamuhat + \sigmat \zbar \right) - B = 0
\quad\Rightarrow\quad
\etamuhat = A^{-1} B - \sigmat \zbar
= \etamu^{*} - \sigmat \zbar.
\end{align}
Thus, for any particular entry $d$, $\etamuhat[d] - \etamu[d]^{*} =
O_p(N^{-1/2})$ as long as $\etasigma[d] = O_p(1)$, both as the
number of samples, $\znum$, goes to infinity.

We can now turn to the behavior of $\etasigmahat$. By plugging $\etamuhat$ as a
function of $\sigmat$, which is given by \cref{eq:normal_muhat}, into each term
of \cref{eq:mvn_dadvi_obj} that depends on $\mu$, we get
\begin{align*}
\frac{1}{2} \etamuhat^\trans A \left(\etamuhat + 2 \sigmat \zbar \right) ={}&
\frac{1}{2} (\etamuhat + \sigmat \zbar - \sigmat \zbar)^\trans
    A \left(\etamuhat + \sigmat \zbar +\sigmat \zbar \right)
\\={}&
\frac{1}{2} (A^{-1} B - \sigmat \zbar)^\trans
    A \left(A^{-1} B +\sigmat \zbar \right)
\\={}&
\frac{1}{2} B^\trans A^{-1} B - \frac{1}{2} \zbar^\trans S A S \zbar
\quad\textrm{and}\\
B^\trans (\etamuhat + \sigmat \zbar) ={}& B^\trans A^{-1} B.
\end{align*}
Plugging the preceding two equations into the corresponding terms of
\cref{eq:mvn_dadvi_obj} gives, up to a constant $\const$ that does not depend on
$\etasigma$,
\begin{align}\label{eq:normal_klhat_sigma_profile}
\klobj{\etasigma} ={}&
\frac{1}{2} \trace{A \sigmat \left(\zzbar - \zbar \zbar^\trans\right) \sigmat}
-\sumd \log \etasigma[d] + \const.
\end{align}
Let $R$ denote the symmetric square
root of the symmetric, positive definite $A$ matrix (so $A = RR$ and $R =
R^\trans$).  Then we have
\begin{align*}
\trace{A S (\zzbar - \zbar \zbar^\trans) S} =
    \trace{R S
        \left(\zzbar - \zbar \zbar^\trans \right) (R S)^\trans}.
\end{align*}

Let $\dequal$ denote equality in distribution, i.e., $X \dequal Y$ means that
$X$ and $Y$ have the same law.  Then
\begin{align*}
R S \z_n \dequal (RSSR)^{1/2} z_n,
\end{align*}
since both the left and the right hand sides of the preceding display have a
$\normal{\cdot}{\zerod[\thetadim], RSSR}$ distribution.  (We have used the fact that
$S$ and $R$ are both symmetric.)  Thus, for any $\etasigma$,
\begin{align}\label{eq:klobj_normal_dequal}
\klobj{\etasigma} \dequal
\frac{1}{2} \trace{R \sigmat \sigmat R \left(\zzbar - \zbar \zbar^\trans\right)}
-\frac{1}{2}\sumd \log \etasigma[d]^2 + \const.
\end{align}
Though the dependence on $\Z$ of the left and right hand sides of the preceding
equation is different, for a given $\etasigma$, the two have the same
distribution, and their optima have the same distribution as well.
The product $\sigmat \sigmat$ is simply $\diag{\etasigma^2}$, so
expanding the trace gives
\begin{align*}
\trace{R \sigmat \sigmat R \left(\zzbar - \zbar \zbar^\trans\right)}
={}&
\sum_{i,j,k=1}^{\thetadim} R_{ij} \etasigma[j]^2 R_{jk}
 \left(\zzbar - \zbar \zbar^\trans\right)_{ki} \Rightarrow \\
\frac{\partial}{\partial \etasigma[d]^2}
    \trace{R \sigmat \sigmat R \left(\zzbar - \zbar \zbar^\trans\right)}
    ={}&
\sum_{i,k=1}^{\thetadim} R_{dk}
     \left(\zzbar - \zbar \zbar^\trans\right)_{ki} R_{id}
    \\ ={}&
     (R \left(\zzbar - \zbar \zbar^\trans\right) R^\trans)_{dd}.
\end{align*}
So the optimal value of $\etasigma[d]^2$ for the right hand side of
\cref{eq:klobj_normal_dequal} is
\begin{align*}
\etasigmahat[d]^2 =
\frac{1}{(R \left(\zzbar - \zbar \zbar^\trans\right) R^\trans)_{dd}}.
\end{align*}
Note that $R \z_n \sim \normal{\cdot}{\zerod[\thetadim], A}$.  Therefore,
if $w_n \sim \normal{\cdot}{\zerod[\thetadim], A}$, then
\begin{align*}
\etasigmahat[d]^{-2}
\dequal
\meann w_{nd}^2 - \left(\meann w_{nd}\right)^2.
\end{align*}
So $\expect{\normz}{\etasigmahat[d]^{-2}} = \frac{N - 1}{N} A_{dd} =
\frac{N - 1}{N} (\etasigma[d]^{*})^{-2}$, and $\etasigmahat[d]^{-2} -
(\etasigma[d]^{*})^{-2}= O_p(N^{-1/2})$.  From this it follows that
$\etamuhat[d] - \etamu[d]^* = O_p(N^{-1/2})$ as well.

\subsection{Proof of \cref{prop:mvn_lr}}\label{app:mvn_lr_proof}
Recall that the linear response covariance estimate for $\theta$ in this model
considers the perturbed model
\begin{align*}
\log \p(\theta, \y | t) := \log \p(\theta, \y) + t^\trans \theta
\end{align*}
and computes
\begin{align*}
\lrcov{\q(\theta \vert \etahat)}{\theta} =
\fracat{d \etamuhat}{d t^\trans}{\etahat} = A^{-1},
\end{align*}
where the final equality follows from
\cref{eq:normal_muhat,eq:normal_klhat_sigma_profile} by identifying $B$ with $B
+ t$ and observing that $\etasigmahat$ does not depend on $t$.  Since $A^{-1}$
is in fact the true posterior variance, the linear response covariance is exact
in this case irrespective of how small $\znum$ is, in contrast even to
$\etasigma^{*}$, which can be a poor estimate of the marginal variances unless
$A$ is diagonal.

\section{High-dimensional global--local problems}\label{app:high_dim_global_local}


\begin{proof} of \cref{thm:global_local}.

We can write
\begin{align}
\MoveEqLeft
\klfullobj{\etahat} - \klfullobj{\etastar}  ={}
\nonumber\\&
\klfullobj{\etahat} - \klfullobj{\etastar}   
+ \klobj{\etastar | \Z} - \klobj{\etastar | \Z} + 
\klobj{\etahat | \Z} - \klobj{\etahat | \Z} = 
\nonumber\\&
\left(\klfullobj{\etahat} - \klobj{\etahat | \Z} \right) +
\left(\klobj{\etastar | \Z}  - \klfullobj{\etastar} \right) +
\left(\klobj{\etahat | \Z} - \klobj{\etastar | \Z}\right) \le
\nonumber\\&
\left(\klfullobj{\etahat} - \klobj{\etahat | \Z} \right) +
\left(\klobj{\etastar | \Z}  - \klfullobj{\etastar} \right) \le
\nonumber\\&
\abs{\klfullobj{\etahat} - \klobj{\etahat | \Z}} +
\abs{\klobj{\etastar | \Z}  - \klfullobj{\etastar}} \le
\nonumber\\&
2 \sup_{\eta \in \etadom} \abs{\klfullobj{\eta} - \klobj{\eta | \Z}}
\label{eq:kl_difference_decomposition}
\end{align}
where the penultimate inequality uses the fact that $\klobj{\etahat | \Z} -
\klobj{\etastar | \Z} \le 0$.
By \cref{assu:local_minimum}, we then have
\begin{align}
\norm{\etahat^\gamma - \etastar^\gamma}_2^2 \le
\frac{2}{P C_3} \sup_{\eta \in \etadom}
    \abs{ \klfullobj{\eta} - \klobj{\eta | \Z}}.
    \label{eq:etagamma_strict_min}
\end{align}

Similarly, for any given $p$, apply \cref{assu:local_minimum} with the
components of $\eta$ matching $\etahat^p$ in the components corresponding
to the variational distribution for $\lambda^p$, and matching $\etastar$
otherwise, giving
\begin{align}\label{eq:etap_strict_min}
\fbar^p(\etastar^\gamma, \etahat^p) -
    \fbar^p(\etastar^\gamma, \etastar^p)
    \ge C_3 \norm{\etahat^p - \etastar^p}_2^2.
\end{align}
Since $\etahat^p$ minimizes $\eta^p \mapsto \fhat^p(\etastar^\gamma, \eta^p)$,
the same reasoning as \cref{eq:kl_difference_decomposition} implies that
\begin{align*}
\fbar^p(\etastar^\gamma, \etahat^p) - \fbar^p(\etastar^\gamma, \etastar^p)
\le 2 \sup_{\eta^p \in \etadom}
    \abs{\fbar^p(\etastar^\gamma, \eta^p) -
         \fhat^p(\etastar^\gamma, \Z^\gamma, \eta^p, \Z^p).
}
\end{align*}
Combining the previous two displays gives
\begin{align*}
\norm{\etahat^p - \etastar^p}_2^2 \le
\frac{2}{C_3} \sup_{\eta \in \etadom}
    \abs{ \fbar^p(\eta^\gamma, \eta^p) -
          \fhat^p(\eta^\gamma, \Z^\gamma, \eta^p, \Z^p)}.
\end{align*}

Next, we use \cref{assu:local_ulln} to control the  difference between the
samples and limiting objectives. Take $\delta' = C_3 \delta / 2$.
Let
\begin{align*}
\err^p :={}& \sup_{\eta^\gamma, \eta^p} \abs{\fhat^p(\eta^\gamma,
\Z^\gamma, \eta^p, \Z^p) - \f^p(\eta^\gamma, \eta^p)}
\quad\textrm{and}\quad
\err :={}
\frac{1}{P} \sup_{\eta \in \etadom} \abs{ \klobj{\eta | \Z} -  \klfullobj{\eta}}.
\end{align*}
Since we can only increase the error by allowing the global parameter to vary
separately for each local ULLN, we have $\err \le \frac{1}{P} \sump \err^p$.
Therefore, $\{\forall p: \err^p \le \delta'\} \Rightarrow \{ \err \le \delta' \}$
and $\{ \err > \delta' \} \Rightarrow \{\exists p: \err^p > \delta' \}$.
A union bound then gives
\begin{align}\label{eq:ulln_error_small}
\p(\err > \delta) \le
\p\left(\bigcup_p \left\{ \err^p > \delta' \right\} \right) \le \sump
\p\left(\err^p > \delta' \right)
\le C_1 \exp\left(-C_2 N + \log P \right) \le \varepsilon,
\end{align}
where the final inequality follows from taking $N \ge N_0$ large enough to
satisfy \cref{assu:local_ulln} and
$N_0 \ge C_2^{-1} \left(
\log P - \log \left(C_1^{-1} \varepsilon\right)
\right)$.

By \cref{eq:etap_strict_min,eq:etagamma_strict_min},
\begin{align*}
\bigcap_{p=1}^P \left\{ \err^p < \delta' \right\}
\Rightarrow \bigcap_{p=1}^P \left\{\norm{\etahat^p - \etastar^p}_2^2 \le \delta
\right\}
\textrm{ and }
\bigcap_{p=1}^P
\left\{ \err^p < \delta' \right\} \Rightarrow \err < \delta' \Rightarrow
\norm{\etahat^\gamma - \etastar^\gamma}_2^2 \le \delta.
\end{align*}
The conclusion then
follows from \cref{eq:ulln_error_small}.
\end{proof}

\begin{proof} of \cref{ex:local_minimum}.

Suppose that, for each $p$, $\fbar^p(\eta^\gamma, \eta^p)$ is
twice-differentiable and convex, and the domain is compact.  Let the first and
second-order derivatives be denoted by $\nabla \fbar^p$ and $\nabla^2 \fbar^p$
respectively, and let $C_3$ lower bound the minimum eigenvalue of all
$\nabla^2 \fbar^p$.

Then a Taylor series expansion with integral remainder gives
\begin{align*}
\fbar^p(\eta^\gamma, \eta^p) - \fbar^p(\etastar^\gamma, \etastar^p) ={}&
\nabla \fbar^p(\etastar^\gamma, \etastar^p)
\begin{pmatrix}
    \eta^\gamma - \etastar^\gamma \\
    \eta^p - \etastar^p
\end{pmatrix}
+ R^p(\etastar, \eta)
\end{align*}
where
\begin{align*}
R^p(\etastar, \eta) =
\int_0^1
\begin{pmatrix}
    \eta^\gamma - \etastar^\gamma \\
    \eta^p - \etastar^p
\end{pmatrix}^\trans
\nabla^2 \fbar(\etastar^\gamma + t (\eta^\gamma - \etastar^\gamma),
               \etastar^p + t (\eta^p - \etastar^p))
\begin{pmatrix}
    \eta^\gamma - \etastar^\gamma \\
    \eta^p - \etastar^p
\end{pmatrix}
(1-t) dt.
\end{align*}
(Apply \citet[Theorem B.2]{dudley:2018:real} with $t \mapsto
\fbar(\etastar^\gamma + t (\eta^\gamma - \etastar^\gamma), \etastar^p + t
(\eta^p - \etastar^p))$.)  Since $\etastar$ is an optimum,
$\sump \nabla \fbar^p(\etastar^\gamma, \etastar^p) = 0$.  Since
$\nabla^2 \fbar^p$ is positive definite
for every $p$, there exists a $C_3 \ge 0$ such that
\begin{align*}
R^p(\etastar, \eta) \ge
    C_3 \left(\norm{\eta^\gamma - \etastar^\gamma}_2^2 +
              \norm{\eta^p - \etastar^p}_2^2 \right).
\end{align*}
It follows that
\begin{align*}
\klfullobj{\eta} - \klfullobj{\etastar} \ge
    C_3 \left( P  \norm{\eta^\gamma - \etastar^\gamma}_2^2 +
    \sump \norm{\eta^p - \etastar^p}_2^2
    \right),
\end{align*}
from which \cref{assu:local_minimum} follows.

\end{proof}

\section{Model details}\label{app:model-details}
\subsection{ARM models}

We selected $\ARMNumModels$ from the \texttt{Stan} example models
repository\footnote{\url{https://github.com/stan-dev/example-models/tree/master/ARM}}.
The models we used are as follows, with their parameter dimension in parentheses:
%
\newenvironment{simplechar}{%
   \catcode`\_=12
}{}


\texttt{\ArmModels{}}.

Some models were eliminated from consideration for being duplicates of other
models, and a small number were eliminated for poor NUTS performance (low
effective sample size or poor R hat).

\subsection{Tennis}

In the tennis model, each player, $i=1,...,M$ has a rating $\theta_i$. These
ratings are drawn from a prior distribution with a shared variance:

\begin{align}
    \theta_i \stackrel{iid}{\sim} \mathcal{N}(0, \sigma^2),
\end{align}

The standard deviation $\sigma$ is given a half-Normal prior with a scale
parameter of 1. The likelihood for a match $n=1,...,N$ between player $i$ and
$j$ is given by:

\begin{align}
    y_n  \sim \text{Bernoulli}(\text{logit}^{-1}(\theta_i - \theta_j))
\end{align}

where $y_n = 1$ if player $i$ won, and $y_n = 0$ if not.

\subsection{Occupancy model}

In occupancy models, we are interested in whether site $i$ is occupied by
species $j$. We model occupation as a binary latent variable
$y_{ij}$, with probability $\Psi_{ij}$ being the probability that the species is
occupying the site. The logit of this probability is modeled as a linear
function of environmental covariates, such as rainfall and temperature:

\begin{align}
y_{ij} \sim \textrm{Bern}(\Psi_{ij}), \\
  \textrm{logit}(\Psi_{ij}) = \bm{x}^{\text{(env)}\intercal}_i \bm{\beta}^{\text{(env)}}_j + \gamma_j, \\
  \bm{\beta}^{\text{(env)}}_j \stackrel{iid}{\sim} \mathcal{N}(0, I), \\
  \gamma_j \stackrel{iid}{\sim} \mathcal{N}(0, 10^2).
\end{align}

However, $y_{ij}$ is assumed not to be observed directly. Instead, we observe
the binary outcome $s_{ijk}$, which equals one if species $j$ was observed at
site $i$ on the $k$-th visit. If the species was observed, we know that it is
present ($y_{ij} = 1$), assuming there are no false positives. If it was not, it
may have been missed, and we model the probability that it would have been
observed if it had been present, $p_{ijk}$. Mathematically speaking, these
assumptions result in the following model:

\begin{align}
  & p(s_{ijk} = 1 \mid y_{ij} = 1) = p_{ijk}, \label{eq:det-prob} \\
  & p(s_{ijk} = 1 \mid y_{ij} = 0) = 0, \label{eq:false-pos} \\
  & \textrm{logit}(p_{ijk}) = \bm{x}_{ik}^{\text{(obs)}\intercal} \bm{\beta}^{\text{(obs)}}_j \label{eq:det-prob-model},
\end{align}

where $\bm{x}_{ik}^{\text{(obs)}\intercal}$ are a set of covariates assumed to
be related to the probability of observing the species, and
$\bm{\beta}^{\text{(obs)}}_j$ are coefficients of a linear model relating these
to the logit of the probability $p_{ijk}$.

As $y_{ij}$ is not observed, it has to be marginalized out for ADVI models to be
applicable. The resulting likelihood is given by:
\begin{align}
  p(s \mid \theta) = \prod_{i=1}^{N} \prod_{j=1}^{J} \left[(1 - \Psi_{ij}) \prod_{k=1}^{K_i} (1 - s_{ijk}) +
  \Psi_{ij} \prod_{k=1}^{K_i} (p_{ijk})^{s_{ijk}} (1 - p_{ijk})^{1 - s_{ijk}} \right].
  \label{eq:observed-data-likelihood}
\end{align}
Its derivation can be found in the appendix of \cite{ingram:2022:occupancy}.
Here, $K_i$ are the number of visits to site $i$, $N$ is the total number of
sites, $J$ is the total number of species, and the rest of the variables are as
defined previously.

\section{Preconditioning DADVI}\label{app:preconditioning}

As described in \cref{sec:lr_mc_computation}, in high-dimensional problems it is
useful to use the conjugate gradient (CG) algorithm to compute both LR
covariances and frequentist standard errors.  The CG algorithm uses products of
the form $\h v$ to approximately solve $\h^{-1}v$, and can be made more
efficient with a preconditioning matrix $M$ with $M \approx \h^{-1}$
\citep[Chapter 5]{wright:1999:optimization}.

The DADVI approximation itself provides an approximation to the upper left
quadrant of $\h^{-1}$, which can be used as a preconditioner. By the LR
covariance formula \cref{eq:lr_def},
\begin{align*}
\cov{\post(\theta)}{\theta} \approx
\lrcov{\q(\theta \vert \etahat)}{\theta} =
\begin{pmatrix}
\ident & \zerod[\thetadim \times \thetadim]
\end{pmatrix}
\h^{-1}
\begin{pmatrix}
\ident \\ \zerod[\thetadim \times \thetadim],
\end{pmatrix},
\end{align*}
which is just the upper-left quadrant of $\h^{-1}$.  Prior
to computing the LR covariances, the best available approximation
of $\cov{\post(\theta)}{\theta}$ --- and, in turn, the upper-left quadrant
of $\h^{-1}$ --- is the mean-field covariance estimate
$\cov{\q(\theta \vert \etahat)}{\theta} = \diag{\exp(\etaxihat[1]), \ldots, \exp(\etaxihat[\thetadim])}$.
Therefore, whenever using CG on a DADVI optimum, we pre-condition
with the matrix
\begin{align*}
\begin{pmatrix}
\cov{\q(\theta \vert \etahat)}{\theta} &
    \zerod[\thetadim \times \thetadim] \\
\zerod[\thetadim \times \thetadim] &
    \ident
\end{pmatrix}.
\end{align*}
Using the preceding preconditioner is formally similar to re-parameterizing the
mean parameters into their natural parameters, as when taking a natural gradient
in stochastic optimization \citep{hoffman:2013:stochasticvi}.

\end{document}